\newcommand*{\eg}{e.g.\@\xspace}
\newcommand*{\ie}{i.e.\@\xspace}
\begin{document}
\twocolumn[

\aistatstitle{Implicit Kernel Learning}

\aistatsauthor{ Chun-Liang Li \And Wei-Cheng Chang \And  Youssef Mroueh \And Yiming Yang \And Barnab{\'a}s P{\'o}czos }

\aistatsaddress{ \texttt{ \{chunlial, wchang2, yiming, bapoczos\}@cs.cmu.edu~~~~~~mroueh@us.ibm.com} \\ Carnegie Mellon University and IBM Research}]

\begin{abstract}
Kernels are powerful and versatile tools in machine learning and statistics. Although the notion of \emph{universal
kernels} and \emph{characteristic kernels} has been studied, kernel selection still greatly influences the empirical
performance.
While learning the kernel in a data driven way has been investigated,
in this paper we explore learning the spectral distribution of kernel via
implicit generative models parametrized by deep neural networks.
We called our method \emph{Implicit Kernel Learning} (IKL).
The proposed framework is simple to train and inference is performed via sampling random Fourier features.
We investigate two applications of the proposed IKL as examples, including generative adversarial networks with
MMD (MMD GAN) and standard supervised learning.
Empirically, MMD GAN with IKL outperforms vanilla predefined kernels on both image and text generation
benchmarks; using IKL with Random Kitchen Sinks also leads to substantial improvement over existing state-of-the-art
kernel learning algorithms on popular supervised learning benchmarks. 
Theory and conditions for using IKL in both applications are also studied
as well as connections to previous state-of-the-art methods.
\end{abstract}

\section{Introduction}

Kernel methods are among the essential foundations in machine learning and have been extensively studied in the past decades.
In supervised learning, kernel methods allow us to learn non-linear hypothesis. 
They also play a crucial role in statistics. 
Kernel maximum mean discrepancy (MMD)~\citep{Gretton2012ktest} is a powerful two-sample test, which is based 
on a statistics computed via kernel functions.
Even though there is a surge of deep learning in the past years, several successes have been shown by
kernel methods and deep feature extraction.
\citet{wilson2016deep} demonstrate state-of-the-art performance by incorporating deep
learning, kernel and Gaussian process. \citet{Li2015GMM, DziugaiteRG15} use MMD to train deep
generative models for complex datasets. 

In practice, however, kernel selection is always an important step. Instead of choosing by a heuristic, 
several works have studied \emph{kernel learning}. 
Multiple kernel learning (MKL)~\citep{bach2004multiple, lanckriet2004learning, bach2009exploring, gonen2011multiple, duvenaud2013structure} is one of the pioneering frameworks
to combine predefined kernels. 
One recent kernel learning development is to learn kernels via learning spectral distributions
(Fourier transform of the kernel). \citet{wilson2013gaussian} model spectral distributions via a mixture of Gaussians,
which can also be treated as an extension of linear combination of kernels~\citep{bach2004multiple}.  
\citet{oliva2016bayesian} extend it to Bayesian non-parametric models. 
In addition to model spectral distribution with \emph{explicit} density models aforementioned, many works
optimize the sampled random features or its weights (\eg \citet{buazuavan2012fourier, yang2015carte, sinha2016learning,
chang2017data, bullins2017not}).
The other orthogonal approach to modeling spectral distributions is learning feature maps for standard kernels (\eg
Gaussian).  
Feature maps learned by deep learning lead to state-of-the-art
performance on different tasks~\citep{hinton2008using, wilson2016deep, Li2017mmdgan}. 

In addition to learning effective features, implicit generative models via deep learning also lead to 
promising performance in learning distributions of complex data~\citep{Goodfellow14GAN}. 
Inspired by its recent success,
we propose to model kernel spectral distributions with implicit generative models in a
data-driven fashion, which we call \emph{Implicit Kernel Learning} (IKL). 
IKL provides a new route to modeling spectral distributions by learning sampling processes of the spectral densities, which is
under explored by previous works aforementioned.

In this paper, we start from studying the generic problem formulation of IKL, and propose an easily implemented, trained and evaluated
neural network parameterization which satisfies Bochner's theorem (Section~\ref{sec:kernel}). 
We then demonstrate two example applications of the proposed IKL. 
Firstly, 
we explore MMD GAN~\citep{Li2017mmdgan} with IKL on learning to generate images and text (Section~\ref{sec:gan}).
Secondly, we consider a standard two-staged supervised learning task with Random Kitchen Sinks~\citep{sinha2016learning}
(Section~\ref{sec:classification}).
The conditions required for training IKL and its theoretical guarantees in both tasks are also studied.
In both tasks, we show that IKL leads to competitive or better performance than heuristic kernel
selections and existing approaches
modeling kernel spectral densities. It demonstrates the potentials of learning more powerful kernels via deep
generative models. 
Finally, we discuss the connection with existing works in Section~\ref{sec:discussion}.

\section{Kernel Learning}
\label{sec:kernel}

Kernels have been used in several applications with success,
including supervised learning, unsupervised learning, and hypothesis testing. They have also been combined with deep
learning in different applications~\citep{mairal2014convolutional, Li2015GMM, DziugaiteRG15,wilson2016deep, mairal2016end}. 
Given data $x \in \RR^d$, kernel methods compute the inner product of the feature
transformation $\phi(x)$ in a high-dimensional Hilbert space $H$ 
via a kernel function $k : \Xcal \times \Xcal \rightarrow \RR$, which is defined as
$k(x,x') = \langle \phi(x), \phi(x') \rangle_{H}$,
where $\phi(x)$ is usually high or even infinitely dimensional. If $k$ is shift invariant (\ie
$k(x,y)=k(x-y)$), we can represent $k$ as an expectation with respect to a spectral distribution $\PP_k(\omega)$.
\paragraph{Bochner's theorem \citep{Rudin_book_11}}
	A continuous, real valued, symmetric and shift-invariant function
	$k$ on $\RR^d$ is a positive definite kernel if and only if 
	there is a positive finite measure $\PP_k(\omega)$ such that
	\[
		k(x-x') = \int_{\RR^d} e^{i\omega^\top (x-x')}d\PP_k(\omega) = \EE_{\omega\sim \PP_k}\left[ e^{i\omega^\top
		(x-x')} \right]. \label{eq:bochner}
	\]

\subsection{Implicit Kernel Learning}
We restrict ourselves to learning shift invariant kernels. According to that, learning kernels is equivalent to learning a
spectral distribution by optimizing
\begin{equation}
\begin{array}{l}
	  \displaystyle \arg\max_{k\in \Kcal} \sum_{i=1} \EE_{x\sim \PP_i,x'\sim \QQ_i}\left[ F_i(x, x')k(x,x') \right]  = \\
 \displaystyle \arg\max_{k\in \Kcal}   \sum_{i=1} \EE_{x\sim \PP_i,x'\sim \QQ_i}\left[ F_i( x, x')\EE_{\omega\sim \PP_k}\left[ e^{i\omega^\top (x-x')} \right] \right],
\end{array}
\label{eq:kernel_spec}
\end{equation}
where $F$ is a task-specific objective function and $\Kcal$ is a set of kernels. \eqref{eq:kernel_spec} covers many popular objectives, such as kernel
alignment~\citep{gonen2011multiple} and
MMD distance~\citep{Gretton2012ktest}. 
Existing works \citep{wilson2013gaussian, oliva2016bayesian} learn the spectral density $\PP_k(\omega)$ 
with \emph{explicit} forms via parametric or non-parametric models. 
When we learn kernels via~\eqref{eq:kernel_spec}, it may not be necessary to model the density of $\PP_k(\omega)$, 
as long as we are able to estimate kernel evaluations $k(x-x') = \EE_\omega[e^{i\omega^\top(x-x')}]$ via 
sampling from $\PP_k(\omega)$~\citep{Rahimi_NIPS_07}.
Alternatively, \emph{implicit probabilistic (generative) models} define a stochastic procedure that can generate (sample) data from
$\PP_k(\omega)$ without modeling $\PP_k(\omega)$. Recently, the neural implicit generative models~\citep{mackay1995bayesian}
regained attentions with promising results~\citep{Goodfellow14GAN} and 
simple sampling procedures.  We first sample $\nu$ from a base distribution $\PP(\nu)$ which is known (\eg Gaussian distribution), then use a
deterministic function $h_\psi$ parametrized by $\psi$, to transform $\nu$ into $\omega=h_\psi(\nu)$, 
where $\omega$ follows the complex target distribution $\PP_k(\omega)$. 
Inspired by the success of deep implicit generative models~\citep{Goodfellow14GAN}, 
we propose an \emph{Implicit Kernel Learning (IKL)} method by modeling
$\PP_k(\omega)$ via an implicit generative model $h_\psi(\nu)$, where $\nu\sim \PP(\nu)$, which results in
\begin{equation}
k_\psi(x, x') =\EE_{\nu}\left[e^{ih_\psi(\nu)^\top (x-x')}\right]
\label{eq:ikl}
\end{equation}

and reducing~\eqref{eq:kernel_spec} to solve
\begin{equation}
	\begin{array}{c}
		\displaystyle \arg\max_{\psi} \sum_{i=1} \EE_{x\sim \PP_i,x'\sim \QQ_i}\left[F_i(x,x') \EE_\nu\left( e^{ih_\psi(\nu)^\top(x-x')} \right)\right].\\
	\end{array}
\label{eq:IKL}
\end{equation}
The gradient of ~\eqref{eq:IKL} can be represented as 
\[
  \displaystyle \sum_{i=1} \EE_{x\sim \PP_i,x'\sim \QQ_i} \EE_{\nu}\left[ \nabla_\psi F_i(x,x') e^{ih_\psi(\nu)^\top (x-x')} \right].
\]
Thus, ~\eqref{eq:IKL} can be optimized via sampling $x, x'$ from data and $\nu$ from the base distribution to estimate gradient as shown above (SGD) in every iteration. 
Next, we discuss the parametrization of $h_\psi$ to satisfy Bochner's Theorem,
and describe how to evaluate IKL kernel in practice.

\paragraph{Symmetric $\PP_k(\omega)$}
To result in real valued kernels, the spectral density has to be symmetric, where $\PP_k(\omega)=\PP_k(-\omega)$. Thus,
we parametrize $h_\psi(\nu) = \mbox{\texttt{sign}}(\nu)\circ\tilde{h}_{\psi}(\mbox{\texttt {abs}}(\nu))$, where $\circ$ is
the Hadamard product and $\tilde{h}_\psi$ can be any unconstrained function if the base distribution $\PP(\nu)$ is symmetric (\ie
$\PP(\nu)=\PP(-\nu)$), such as standard normal distributions.

\paragraph{Kernel Evaluation} Although there is usually no closed form for the kernel evaluation $k_\psi(x, x')$ 
in~\eqref{eq:ikl}
with fairly complicated $h_\psi$, we can evaluate (approximate) $k_\psi(x,x')$ via sampling 
finite number of random Fourier features $\hat{k}_\psi(x, x')=\hat{\phi}_{h_\psi}(x)^\top\hat{\phi}_{h_\psi}(x')$, where
$\hat{\phi}_{h_\psi}(x)^\top = [\phi(x; h_\psi(\nu_1)), \dots, \phi(x; h_\psi(\nu_m))]$, and $\phi(x; \omega)$ is
the evaluation on $\omega$ of the Fourier transformation $\phi(x)$~\citep{Rahimi_NIPS_07}.

Next, we demonstrate two example applications covered by~\eqref{eq:IKL}, where we can apply IKL, including kernel alignment and maximum mean
discrepancy (MMD).

\section{MMD GAN with IKL}
\label{sec:gan}
Given $\{x_i\}_{i=1}^n \sim \PP_\Xcal$,
instead of estimating the density $\PP_\Xcal$, Generative
Adversarial Network (GAN)~\citep{Goodfellow14GAN} is an implicit generative model, which learns a generative
network~$g_\theta$ (generator). The generator $g_\theta$ transforms a base distribution $\PP_\Zcal$ over $\Zcal$ into $\PP_\theta$
to approximate $\PP_\Xcal$, where $\PP_\theta$ is the distribution of~$g_\theta(z)$ and~$z\sim \PP_\Zcal$.
During the training, GAN alternatively estimates a \emph{distance} $D(\PP_\Xcal\|\PP_\theta)$ between~$\PP_\Xcal$ and $\PP_\theta$, and
updates $g_\theta$ to minimize $D(\PP_\Xcal\|\PP_\theta)$. 
Different probability metrics have been studied~\citep{Goodfellow14GAN, Li2015GMM, DziugaiteRG15, NowozinCT16fgan, 
ArjovskyCB17wgan, Mroueh2017mcgan, Li2017mmdgan, mroueh2017fisher, gulrajani2017improved, mroueh2017sobolev,
arbel2018gradient} for
training GANs.

Kernel maximum mean discrepancy (MMD) is a probability metric, which is commonly used in two-sample-test to
distinguish two distributions with finite samples~\citep{Gretton2012ktest}. Given a kernel $k$,  
the MMD between $\PP$ and $\QQ$ is defined as
\begin{equation}
	M_k(\PP, \QQ) = \EE_{\PP,\PP}[ k(x,x') ] -2\EE_{\PP,\QQ}[k(x,y)] + \EE_{\QQ, \QQ}[ k(y,y') ].
	\label{eq:mmd}
\end{equation}
For characteristic kernels, $M_k(\PP, \QQ)=0$ \emph{iff} $\PP=\QQ$.
\citet{Li2015GMM, DziugaiteRG15} train the generator $g_\theta$ by optimizing $\min_\theta {M}_k(\PP_\Xcal,
\PP_\theta)$ with a Gaussian kernel~$k$. \citet{Li2017mmdgan} propose MMD GAN,
which trains $g_\theta$ via $\min_\theta \max_{k\in \Kcal}{M}_k(\PP_\Xcal,\PP_\theta)$, where $\Kcal$ is a pre-defined
set of kernels. The intuition is to learn a kernel $\argmax_{k\in\Kcal} {M}_k(\PP_\Xcal,\PP_\theta)$, which has a
stronger signal (\ie larger distance when $\PP_\Xcal\neq \PP_\theta$) to train $g_\theta$. 
Specifically, \citet{Li2017mmdgan} consider a 
composition kernel $k_{\varphi}$ 
which combines Gaussian kernel $k$ and a neural network~$f_\varphi$ as $k_\varphi = k\circ f_\varphi$, where
\begin{equation}
	 k_{\varphi}(x, x') = \exp(-\|f_\varphi(x)-f_\varphi(x)'\|^2).
	\label{eq:composition_k}
\end{equation}
The MMD GAN objective then becomes 
	$\min_\theta \max_\varphi M_{\varphi}(\PP_\Xcal, \PP_\theta)$.

\subsection{Training MMD GAN with IKL}
Although the composition kernel with a learned
feature embedding $f_\varphi$ is powerful, choosing a good base kernel $k$ is still crucial in
practice~\citep{binkowski2018demystifying}. 
Different base kernels for MMD GAN, such as rational quadratic kernel~\citep{binkowski2018demystifying} and
distance kernel~\citep{bellemare2017cramer}, have been studied.
Instead of choosing it by hands, we propose to learn the base kernel by IKL, which 
extend~\eqref{eq:composition_k} to be  $k_{\psi,\varphi}=k_\psi\circ f_\varphi$ with the form
\begin{equation}
	k_{\psi,\varphi}(x, x') = \EE_\nu\left[ e^{ih_\psi(\nu)^\top(f_\varphi(x)-f_\varphi(x'))  } \right].
	\label{eq:ikl_mmdgan}
\end{equation}
We then extend the MMD GAN objective to be 
\begin{equation}
	\min_\theta \max_{\psi,\varphi} M_{\psi,\varphi}(\PP_\Xcal, \PP_\theta),
	\label{eq:mmdgan_ikl}
\end{equation}
where $M_{\psi,\varphi}$ is the MMD distance~\eqref{eq:mmd} with the IKL kernel~\eqref{eq:ikl_mmdgan}.
Clearly, for a given $\varphi$, the maximization over $\psi$ in~\eqref{eq:mmdgan_ikl} can
be represented as~\eqref{eq:kernel_spec} by letting $F_1(x,x')=1$, $F_2(x,y)=-2$ and
$F_3(y,y')=1$.
In what follows, we will use  for convenience $k_{\psi,\varphi}$, $k_{\psi}$ and $k_{\varphi}$ to denote kernels defined in~\eqref{eq:ikl_mmdgan},
\eqref{eq:ikl} and ~\eqref{eq:composition_k} respectively.

\subsection{Property of MMD GAN with IKL}

As proven by~\citet{arjovsky2017towards}, some probability distances adopted by existing works (\eg \citet{Goodfellow14GAN}) 
are not \emph{weak} (\ie $\PP_n \xrightarrow[]{D} \PP$ then $D(\PP_n\|\PP) \rightarrow 0$), which cannot provide better
signal to train $g_\theta$. Also, they usually 
suffer from discontinuity, hence it cannot be trained via gradient descent at certain points. 
We prove that $\max_{\psi,\varphi} M_{\psi,\varphi}(\PP_\Xcal, \PP_\theta)$  is a continuous and
differentiable objective in $\theta$ and \emph{weak} under mild assumptions as used in~\citep{ArjovskyCB17wgan, Li2017mmdgan}.
\begin{assumption} 
	\label{ass:cont} 
	$g_\theta(z)$ is  locally Lipschitz and differentiable in $\theta$; $f_\varphi(x)$ is Lipschitz in $x$ and
	$\varphi \in \Phi$ is compact. 
	$f_\varphi\circ g_\theta(z)$ is differentiable in $\theta$ and 
    there are local Lipschitz constants, which is independent of $\varphi$, such that $ \EE_{z \sim \PP_z}[L(\theta, z)]
	<+\infty$. The above assumptions are adopted by~\citet{ArjovskyCB17wgan}.
	Lastly, assume given any $\psi\in\Psi$, where $\Psi$ is compact, $k_\psi(x,x')=\EE_\nu\left[ e^{ih_\psi(\nu)^\top(x-x')}
	 \right]$ and $|k_\psi(x,x')|<\infty$ is differentiable and Lipschitz in $(x,x')$ which has an 
	 upper bound $L_k$ for Lipschitz constant of $(x,x')$ given different $\psi$.
\end{assumption}

\begin{theorem} \label{thm:cont}
Assume function $g_\theta$ and kernel $k_{\psi,\varphi}$ satisfy Assumption~\ref{ass:cont}, 
$\max_{\psi,\varphi} M_{\psi,\varphi}$ is weak, that is, $\max_{\psi,\varphi} M_{\psi,\varphi}(\PP_\Xcal,
\PP_n) \rightarrow 0 \Longleftrightarrow  \PP_n \xrightarrow[]{D} \PP_\Xcal$.
Also, $\max_{\psi,\varphi} M_{\psi,\varphi}(\PP_\Xcal, \PP_\theta)$ is continuous everywhere 
and differentiable almost everywhere in $\theta$.
\end{theorem}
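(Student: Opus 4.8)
The argument follows the template of Theorem~1 in \citet{ArjovskyCB17wgan} and the corresponding statement in \citet{Li2017mmdgan}, with the extra bookkeeping required by the implicit kernel $k_\psi$ and the compact parameter set $\Psi$. Expand $M_{\psi,\varphi}(\PP_\Xcal,\PP_\theta)$ via \eqref{eq:mmd} into three expectations of $k_{\psi,\varphi}(x,x')=k_\psi\big(f_\varphi(x),f_\varphi(x')\big)$. The first step is a \emph{uniform} local Lipschitz bound in $\theta$: fixing $\theta_0$ and a neighborhood, I would estimate $|M_{\psi,\varphi}(\PP_\Xcal,\PP_\theta)-M_{\psi,\varphi}(\PP_\Xcal,\PP_{\theta'})|$ by pushing the $\PP_\theta$-terms forward through $g_\theta$, reducing each to a quantity controlled by $\EE_z\big[\,|k_{\psi,\varphi}(g_\theta(z),\cdot)-k_{\psi,\varphi}(g_{\theta'}(z),\cdot)|\,\big]$, and then chaining the Lipschitz constant $L_k$ of $k_\psi$, the Lipschitz constant of $f_\varphi$ (uniformly bounded over the compact $\Phi$), and the local Lipschitz constant $L(\theta,z)$ of $f_\varphi\circ g_\theta$ with $\EE_z[L(\theta,z)]<\infty$ (Assumption~\ref{ass:cont}). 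This yields a bound $C\|\theta-\theta'\|$ with $C$ depending only on the neighborhood of $\theta_0$, not on $(\psi,\varphi)\in\Psi\times\Phi$.

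Given this, the regularity claims follow: a pointwise supremum of functions sharing a common local Lipschitz constant is locally Lipschitz in $\theta$, hence continuous everywhere; Rademacher's theorem then makes $\theta\mapsto\max_{\psi,\varphi}M_{\psi,\varphi}(\PP_\Xcal,\PP_\theta)$ differentiable almost everywhere.

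For the equivalence, consider first ($\Leftarrow$). If $\PP_n\xrightarrow[]{D}\PP_\Xcal$, then for each fixed $(\psi,\varphi)$ every term of $M_{\psi,\varphi}(\PP_\Xcal,\PP_n)$ converges to its counterpart in $M_{\psi,\varphi}(\PP_\Xcal,\PP_\Xcal)=0$, since $k_{\psi,\varphi}$ is bounded (indeed $|k_\psi|\le 1$) and continuous on the product space, so $M_{\psi,\varphi}(\PP_\Xcal,\PP_n)\to0$ pointwise. To promote this to the supremum, I use compactness of $\Psi\times\Phi$ together with tightness of $\{\PP_n\}$ (Prokhorov): the supremum is attained at some $(\psi_n,\varphi_n)$; passing to a subsequence with $(\psi_n,\varphi_n)\to(\psi^\star,\varphi^\star)$ and using joint continuity of $(\psi,\varphi,\QQ)\mapsto M_{\psi,\varphi}(\PP_\Xcal,\QQ)$ in the weak topology (which rests on continuity of $h_\psi$ in $\psi$, of $f_\varphi$ in $\varphi$, the uniform bound on $k_\psi$, and uniform continuity of the kernel on the compact sets carrying most of the mass) forces $\max_{\psi,\varphi}M_{\psi,\varphi}(\PP_\Xcal,\PP_n)\to M_{\psi^\star,\varphi^\star}(\PP_\Xcal,\PP_\Xcal)=0$. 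For ($\Rightarrow$), since $\max_{\psi,\varphi}M_{\psi,\varphi}\ge M_{\psi_0,\varphi_0}$ for every fixed pair, I would choose $(\psi_0,\varphi_0)$ making $k_{\psi_0,\varphi_0}$ characteristic --- for instance $h_{\psi_0}(\nu)=\nu$ with $\nu$ standard Gaussian recovers the Gaussian RBF kernel, which is characteristic --- so that $M_{\psi_0,\varphi_0}(\PP_\Xcal,\PP_n)\to0$; on the (compact) data domain a characteristic kernel metrizes weak convergence, hence $\PP_n\xrightarrow[]{D}\PP_\Xcal$.

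The main obstacle is the uniformity: making the local Lipschitz constant in $\theta$ genuinely independent of $(\psi,\varphi)$, which forces careful use of the compactness of $\Psi$ and $\Phi$ and of the uniform constants $L_k$ and $\EE_z[L(\theta,z)]$ in Assumption~\ref{ass:cont}, together with the companion joint-continuity statements needed to control the $\sup$ over $\Psi\times\Phi$ in the ($\Leftarrow$) direction. The ($\Rightarrow$) direction additionally requires that the IKL family contains a characteristic kernel and that the domain is such that MMD metrizes weak convergence.
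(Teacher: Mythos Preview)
Your proposal is correct, and the $(\Rightarrow)$ direction together with the local-Lipschitz chaining $L_k\cdot\EE_z[L(\theta,z)]$ matches the paper exactly. The two remaining pieces are handled by different tools. For a.e.\ differentiability in $\theta$, you pass from the uniform local Lipschitz bound to Rademacher's theorem on the supremum; the paper instead invokes a max-function lemma of \citet{borisenko1992directional} (if $f(x,u)$ is locally Lipschitz in $x$, $U$ is compact, and $\nabla_x f(x,u^\star(x))$ exists, then $\max_u f(x,u)$ is differentiable a.e.). Your route is more elementary and avoids the extra hypothesis of differentiability at the maximizer; the paper's route, when it applies, yields an envelope-type formula for the gradient. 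For the $(\Leftarrow)$ direction of weakness, the paper does not run a compactness/subsequence argument: it shows the RKHS feature map $x\mapsto k_\psi(f_\varphi(x),\cdot)$ is Lipschitz with constant at most $2L_kL$, uniformly over $\Psi\times\Phi$, and then appeals to \citet{arbel2018gradient} and Corollary~11.3.4 of \citet{dudley2018real} to conclude $M_{\psi,\varphi}(\PP_\Xcal,\PP_n)\to0$. Your approach is more hands-on but obliges you to actually establish the joint continuity of $(\psi,\varphi,\QQ)\mapsto M_{\psi,\varphi}(\PP_\Xcal,\QQ)$ in the weak topology (the tightness/equicontinuity step you flag); the paper's RKHS-Lipschitz route packages that uniformity into a single constant and a citation.
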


\begin{lemma}
Assume $\Xcal$ is bounded. Let $x,x' \in \Xcal$,
$k_\psi(x,x')=\EE_\nu\left[ e^{ih_\psi(\nu)^\top(x-x')} \right]$ is Lipschitz in $(x,x')$ if
$\EE_\nu\left[\|h_\psi(\nu)\|^2\right] < \infty$, which is variance since $\EE_\nu\left[h_\psi(\nu)\right]=0.$
\label{lem:var}
\end{lemma}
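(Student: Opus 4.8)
The plan is to bound $k_\psi$ directly through the elementary inequality $|e^{ia}-e^{ib}|\le|a-b|$ for $a,b\in\RR$ (which follows from $|e^{ia}-e^{ib}|=|e^{i(a-b)}-1|=2|\sin((a-b)/2)|$), thereby avoiding any interchange of derivative and expectation. Fix $(x_1,x_1'),(x_2,x_2')\in\Xcal\times\Xcal$ and set $\delta=(x_1-x_1')-(x_2-x_2')$. For each fixed $\nu$, applying the inequality with $a=h_\psi(\nu)^\top(x_1-x_1')$ and $b=h_\psi(\nu)^\top(x_2-x_2')$ and then Cauchy--Schwarz gives
\[
\bigl|e^{ih_\psi(\nu)^\top(x_1-x_1')}-e^{ih_\psi(\nu)^\top(x_2-x_2')}\bigr|\le \bigl|h_\psi(\nu)^\top\delta\bigr|\le \|h_\psi(\nu)\|\,\|\delta\|.
\]
Taking $\EE_\nu$ of both sides and using $|\EE_\nu Z|\le\EE_\nu|Z|$ (Jensen applied to the modulus) yields $|k_\psi(x_1,x_1')-k_\psi(x_2,x_2')|\le \EE_\nu\!\left[\|h_\psi(\nu)\|\right]\|\delta\|$.

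The next step is to convert the first-moment bound into the stated second-moment condition: by Cauchy--Schwarz (equivalently Jensen for $t\mapsto t^2$), $\EE_\nu\!\left[\|h_\psi(\nu)\|\right]\le\bigl(\EE_\nu\!\left[\|h_\psi(\nu)\|^2\right]\bigr)^{1/2}=:L_k$, which is finite by hypothesis. Combining this with $\|\delta\|=\|(x_1-x_2)-(x_1'-x_2')\|\le\|x_1-x_2\|+\|x_1'-x_2'\|$ shows $k_\psi$ is Lipschitz in $(x,x')$ with a constant that is $L_k$ up to a fixed factor depending only on the product norm chosen on $\Xcal\times\Xcal$; in particular it is finite, which is exactly the uniform bound $L_k$ demanded in Assumption~\ref{ass:cont}. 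I would note that boundedness of $\Xcal$ is not actually used in this chain of inequalities; it only ensures that this constant is a \emph{global} Lipschitz constant.

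Finally, I would verify the parenthetical claim that $\EE_\nu\!\left[\|h_\psi(\nu)\|^2\right]$ is the variance of $h_\psi(\nu)$. By the symmetrization parametrization $h_\psi(\nu)=\texttt{sign}(\nu)\circ\tilde h_\psi(\texttt{abs}(\nu))$ with $\PP(\nu)$ symmetric, we have $h_\psi(-\nu)=-h_\psi(\nu)$; since $\nu$ and $-\nu$ are equal in distribution, $\EE_\nu[h_\psi(\nu)]=0$, so $\EE_\nu\!\left[\|h_\psi(\nu)\|^2\right]=\operatorname{tr}\operatorname{Cov}(h_\psi(\nu))$, the total variance.

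I do not anticipate a genuine obstacle: the only points requiring care are the passage of the modulus inside the expectation (valid by Jensen) and the step from a finite second moment to a finite first moment (Cauchy--Schwarz). An alternative route is to differentiate under the expectation, $\nabla_x k_\psi(x,x')=\EE_\nu\!\left[i\,h_\psi(\nu)\,e^{ih_\psi(\nu)^\top(x-x')}\right]$, whose norm is at most $\EE_\nu\!\left[\|h_\psi(\nu)\|\right]$, the interchange being justified by the dominating function $\|h_\psi(\nu)\|$; this gives the same constant but carries extra measure-theoretic overhead, so I would present the direct argument above.
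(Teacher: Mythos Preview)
Your argument is correct, and it differs from the paper's in a way worth noting. The paper works with the real form $k_\psi(t)=\EE_\nu[\cos(h_\psi(\nu)^\top t)]$, differentiates under the expectation to get $\nabla_t k_\psi(t)=-\EE_\nu[\sin(h_\psi(\nu)^\top t)\,h_\psi(\nu)]$, and then bounds $|\sin(h_\psi(\nu)^\top t)|\le |h_\psi(\nu)^\top t|\le \|h_\psi(\nu)\|\,\|t\|$ to obtain $\|\nabla_t k_\psi(t)\|\le \|t\|\,\EE_\nu[\|h_\psi(\nu)\|^2]$, after which the mean value theorem gives the Lipschitz bound. Because the gradient bound carries a factor $\|t\|$, the paper genuinely needs $\Xcal$ bounded to make the constant uniform; this is exactly where that hypothesis enters. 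Your direct increment bound via $|e^{ia}-e^{ib}|\le|a-b|$ avoids both the differentiation-under-the-integral step and the mean value theorem, and yields the sharper Lipschitz constant $\bigl(\EE_\nu[\|h_\psi(\nu)\|^2]\bigr)^{1/2}$ with no dependence on $\|t\|$; your observation that boundedness of $\Xcal$ is unnecessary is therefore a real improvement over the paper's proof rather than a stylistic remark. The trade-off is that the paper lands on the second moment directly (via $|\sin x|\le|x|$) without an extra Cauchy--Schwarz step, at the price of the $\|t\|$ factor; you reach the first moment naturally and then upgrade. Either way the conclusion is the same, and your route is the cleaner of the two.
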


We penalize~$\lambda_h(\EE_\nu\left[\|h_\psi(\nu)\|^2\right] -u )^2$ as an approximation of Lemma~\ref{lem:var} in practice to ensure that assumptions in
Theorem~\ref{thm:cont} are satisfied. 
The algorithm with IKL and gradient penalty~\citep{binkowski2018demystifying} is shown in Algorithm \ref{alg:mmdgan_IKL}.

\vspace{-1em}
\begin{algorithm} 
	\caption{MMD GAN with IKL} 
    \label{alg:mmdgan_IKL}
	\begin{algorithmic}
	    \STATE {\bfseries Input: }
            $\eta$ the learning rate, $B$ the batch size,
            $n_c$ number of $f,h$ updates per $g$ update,
            $m$ the number of basis,
            $\lambda_{GP}$ the coefficient of gradient penalty,
            $\lambda_h$ the coefficient of variance constraint.
        \STATE {{\bf Initial} parameter $\theta$ for $g$, $\varphi$ for $f$, $\psi$ for $h$}
        \STATE {{\bf Define} $\Lcal(\psi,\varphi) = M_{\psi,\varphi}(\PP_\Xcal, \PP_\theta)
            - \lambda_{GP} (\|\nabla_{\hat{x}} f_{\varphi}(\hat{x})\|_2 - 1 )^2
            - \lambda_h(\EE_{\nu}[\|h_\psi(\nu)\|^2] - u)^2$}
	    \WHILE {$\theta$ has not converged}
            \FOR{$t=1,\ldots,n_c$}
                \STATE{Sample $\{x_i\}_{i=1}^B \sim \PP(\Xcal), \
                \{z_j\}_{j=1}^B \sim \PP(\Zcal), \
                \{\nu_k\}_{k=1}^m \sim \PP(\nu)$}
                \STATE{ $(\psi,\varphi) \leftarrow \varphi + \eta\text{Adam}\left( (\psi,\varphi), \nabla_{\psi,\varphi}\Lcal(\psi,\varphi)\right) $ }
            \ENDFOR
            \STATE{Sample $\{x_i\}_{i=1}^B \sim \PP(\Xcal), \
            \{z_j\}_{j=1}^B \sim \PP(\Zcal), \
            \{\nu_k\}_{k=1}^m \sim \PP(\nu)$}
            \STATE{ $\theta \leftarrow \theta - \eta\text{Adam}(\theta, \nabla_{\theta} M_{\psi,\varphi}(\PP_\Xcal, \PP_\theta) ) $ }
	    \ENDWHILE
	\end{algorithmic}
\end{algorithm}
\vspace{-1em}

\subsection{Empirical Study}
\label{sec:gan_empirical}
We consider image and text generation tasks for quantitative evaluation.
For image generation, we evaluate the inception score~\citep{salimans2016improved} and FID score~\citep{heusel2017gans} on
CIFAR-10~\citep{krizhevsky2009learning}. 
We use DCGAN~\citep{radford2015unsupervised} and expands the output of $f_\varphi$ to be 16-dimensional
as~\citet{binkowski2018demystifying}.
For text generation, we consider a length-32 character-level generation task on Google Billion Words dataset. 
The evaluation is based on Jensen-Shannon divergence on empirical
4-gram probabilities (JS-4) of the generated sequence and the validation data as used by~\citet{gulrajani2017improved, 
heusel2017gans, mroueh2017sobolev}.
The model architecture follows~\citet{gulrajani2017improved} in using ResNet with 1D convolutions.
We train every algorithm $10,000$ iterations for comparison. 

For MMD GAN with fixed base kernels, we consider the mixture of Gaussian kernels 
$k(x,x')=\sum_{q}\exp(-\frac{\|x-x'\|^2}{2\sigma_q^2})$~\citep{Li2017mmdgan} and the mixture of RQ kernels
$k(x,x')=\sum_{q}(1+\frac{\|x-x'\|^2}{2\alpha_q})^{-\alpha_q}$. 
We tuned hyperparameters $\sigma_q$ and $\alpha_q$ for each kernel as reported in Appendix~\ref{sec:gan_hyp}.

Lastly, for learning base kernels, we compare IKL with SM kernel~\citep{wilson2013gaussian} 
$f_\varphi$, which learns mixture of Gaussians to model kernel spectral density.
It can also be treated as the \emph{explicit generative model counter part} of the proposed IKL.

In both tasks, $\PP(\nu)$, the base distribution of IKL, is a standard normal distribution and $h_\psi$ is a 3-layer MLP with $32$ hidden units for
each layer. Similar to the aforementioned mixture kernels, we consider the mixture of IKL kernel with the variance
constraints $\EE[\|h_\psi(\nu)\|^2] = 1/\sigma_q$, where $\sigma_q$ is the bandwidths for the mixture of Gaussian kernels. 
Note that if $h_\psi$ is an identity map, we recover the mixture of Gaussian kernels.
We fix $\lambda_h$ to be $10$ and resample $m=1024$ random features for IKL in every iteration.
For other settings, we follow~\cite{binkowski2018demystifying} and the hyperparameters can be found in Appendix~\ref{sec:gan_hyp}.

\subsubsection{Results and Discussion}

We compare MMD GAN with the proposed IKL and different fixed kernels.
We repeat the experiments $10$ times and report the average result
with standard error in Table~\ref{tb:gan_results}. 
Note that for inception score the larger the better; while JS-4 the smaller the better. 
We also report WGAN-GP results as a reference. 
Since FID score results~\citep{heusel2017gans} is consistent with inception score and does not change our discussion,
we put it in Appendix~\ref{sec:addition} due to space limit. 
Sampled images on larger datasets are shown in Figure~\ref{fig:gan-samples}. 
\begin{table}[h]
    \begin{tabular}{c|c|c}
        \toprule
        Method & Inception Scores $(\uparrow)$ & JS-4 $(\downarrow)$ \\
        \midrule
        Gaussian & $6.726 \pm 0.021$  & $0.381 \pm 0.003$\\
        RQ & $6.785 \pm 0.031$ &  $0.463 \pm 0.005$\\
		SM & $6.746\pm 0.031$ & $0.378 \pm 0.003$ \\
        IKL & $\mathbf{6.876 \pm 0.018}$ & $\mathbf{0.372 \pm 0.002}$\\
        \midrule
        WGAN-GP & $6.539 \pm 0.034$  & $0.379 \pm 0.002$ \\
        \bottomrule
    \end{tabular}
    \caption{Inception scores and JS-4 divergece results.}
	\vspace{-4mm}
    \label{tb:gan_results}
\end{table}

\begin{figure*}[t]
    \centering
    \begin{subfigure}[t]{0.31\linewidth}
    	\includegraphics[width=\linewidth]{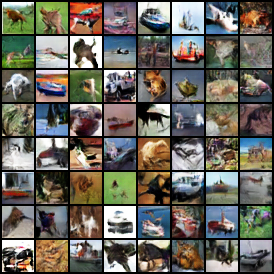}
    \end{subfigure}
    \hfill%
    \begin{subfigure}[t]{0.31\linewidth}
    	\includegraphics[width=\linewidth]{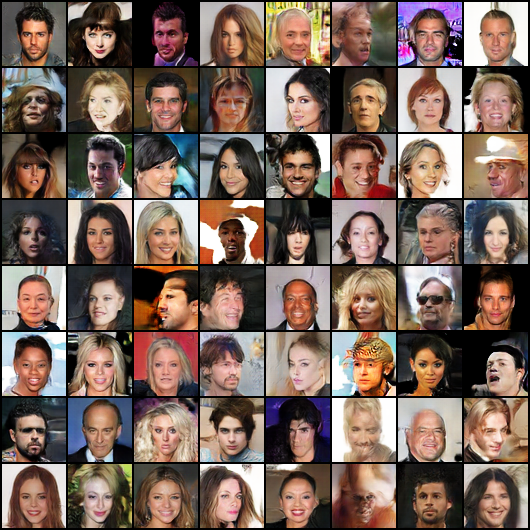}
    \end{subfigure}
    \hfill%
    \begin{subfigure}[t]{0.31\linewidth}
    	\includegraphics[width=\linewidth]{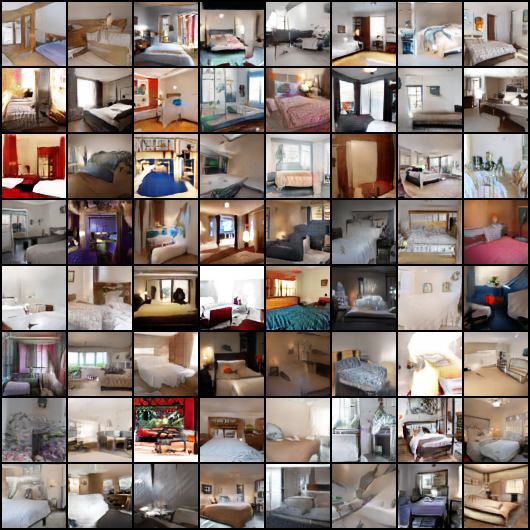}
    \end{subfigure}
	\vspace{-2mm}
	\caption{Samples generated by MMDGAN-IKL on CIFAR-10, CELEBA and LSUN dataset.}
	\label{fig:gan-samples}
	\vspace{-3mm}
\end{figure*}

\paragraph{Pre-defined Kernels}
\citet{binkowski2018demystifying} show RQ kernels outperform Gaussian and energy distance kernels on image generation. 
Our empirical results agree with such finding: RQ kernels achieve $6.785$ inception score while for Gaussian kernel it is
$6.726$, as shown in the left column of Table \ref{tb:gan_results}. In text generation, nonetheless, RQ kernels only
achieve $0.463$ JS-4 score\footnote{For RQ kernels, we searched $10$ possible hyperparameter settings and reported the
best one in Appendix, to ensure the unsatisfactory performance is not caused by the improper parameters.}
and are not on par with $0.381$ acquired by Gaussian kernels, even though it is still slightly worse than WGAN-GP.
These results imply \emph{kernel selection is task-specific}. On the other hand, the proposed IKL learns  
kernels in a data-driven way, which results in the best
performance in both tasks. In CIFAR-10, although Gaussian kernel is worse than RQ, IKL is still able to transforms
$\PP(\nu)$, which is Gaussian, into a powerful kernel, and outperforms RQ on inception scores ($6.876$ v.s. $6.785$).
For text generation, from Table \ref{tb:gan_results} and
Figure~\ref{fig:text_mmd}, we observe that IKL can further boost Gaussian into better kernels with substantial
improvement.
Also, we note that the difference between IKL and pre-defined kernels in Table~\ref{tb:gan_results} is significant
based on the $t$-test at 95\% confidence level.

\begin{figure}[h]
   \vspace{-4mm}
   \includegraphics[width=0.5\textwidth]{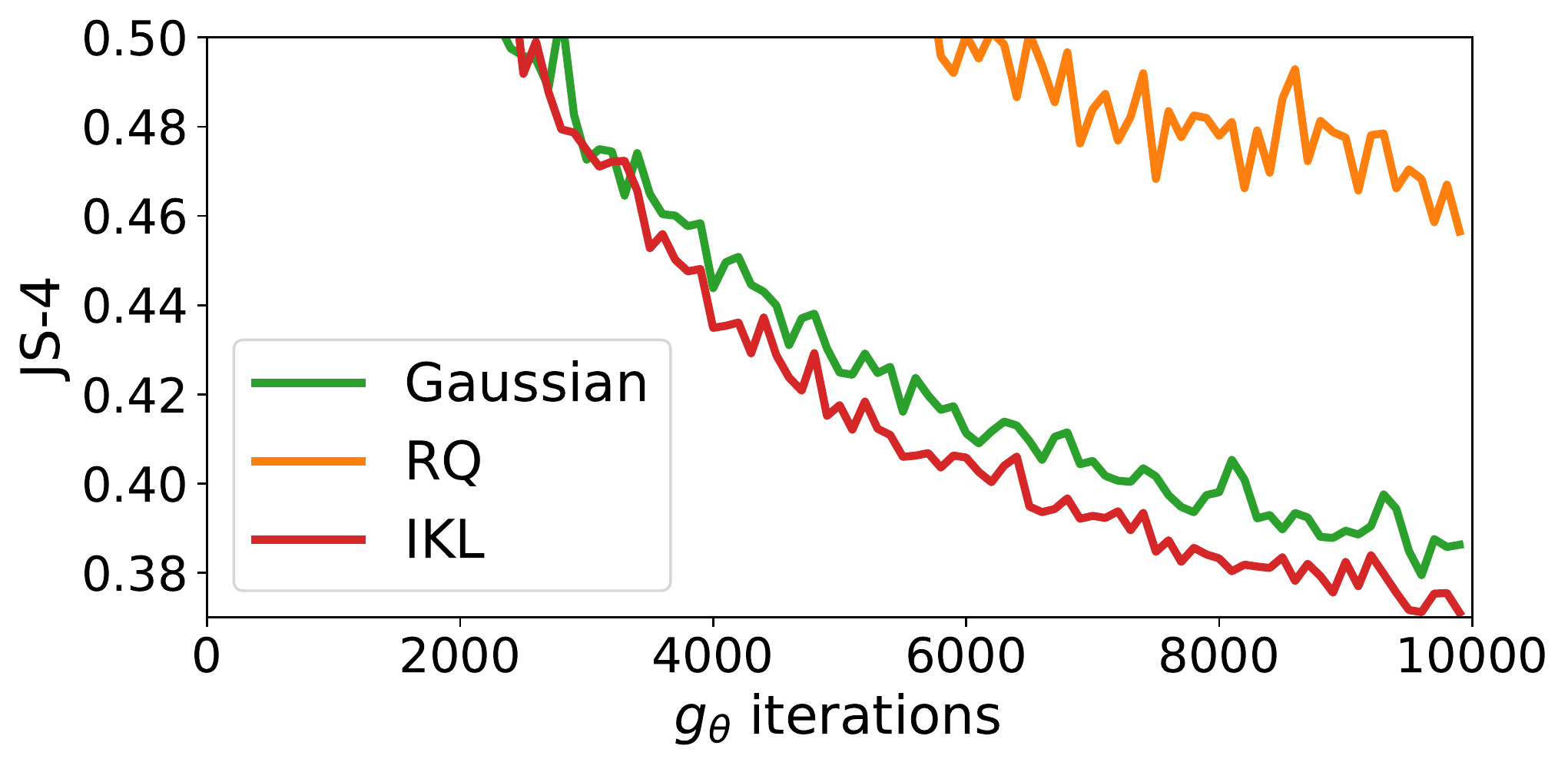}
   \vspace{-6mm}
   \caption{Convergence of MMD GANs with different kernels on text generation.}
   \label{fig:text_mmd}
   \vspace{-4mm}
\end{figure}

\paragraph{Learned Kernels}
The SM kernel~\citep{wilson2013gaussian}, which learns the spectral density via mixture of Gaussians, does not
significantly outperforms Gaussian kernel as shown in Table~\ref{tb:gan_results}, since ~\citet{Li2017mmdgan}
already uses equal-weighted mixture of Gaussian formulation. It suggests that proposed IKL can learn more complicated
and effective spectral distributions than simple mixture models. 

\paragraph{Study of Variance Constraints}
In Lemma~\ref{lem:var}, we prove bounding variance $\EE[\|h_\psi(\nu)\|^2]$ guarantees $k_\psi$ to be Lipschitz as required in
Theorem~\ref{thm:cont}. We investigate the importance of this constraint.
In Figure~\ref{fig:novar}, we show the training objective (MMD), $\EE[\|h_\psi(\nu)\|^2]$ and the JS-4 divergence for
training MMD GAN (IKL) without variance constraint, \ie $\lambda_h=0$.
We could observe the variance keeps going up without constraints, which leads exploded MMD values. 
Also, when the exploration is getting severe, the JS-4 divergence starts increasing, which implies MMD cannot provide
meaningful signal to $g_\theta$. The study justifies the validity of Theorem~\ref{thm:cont} and Lemma~\ref{lem:var}.

\begin{figure}[h]
   \includegraphics[width=0.5\textwidth]{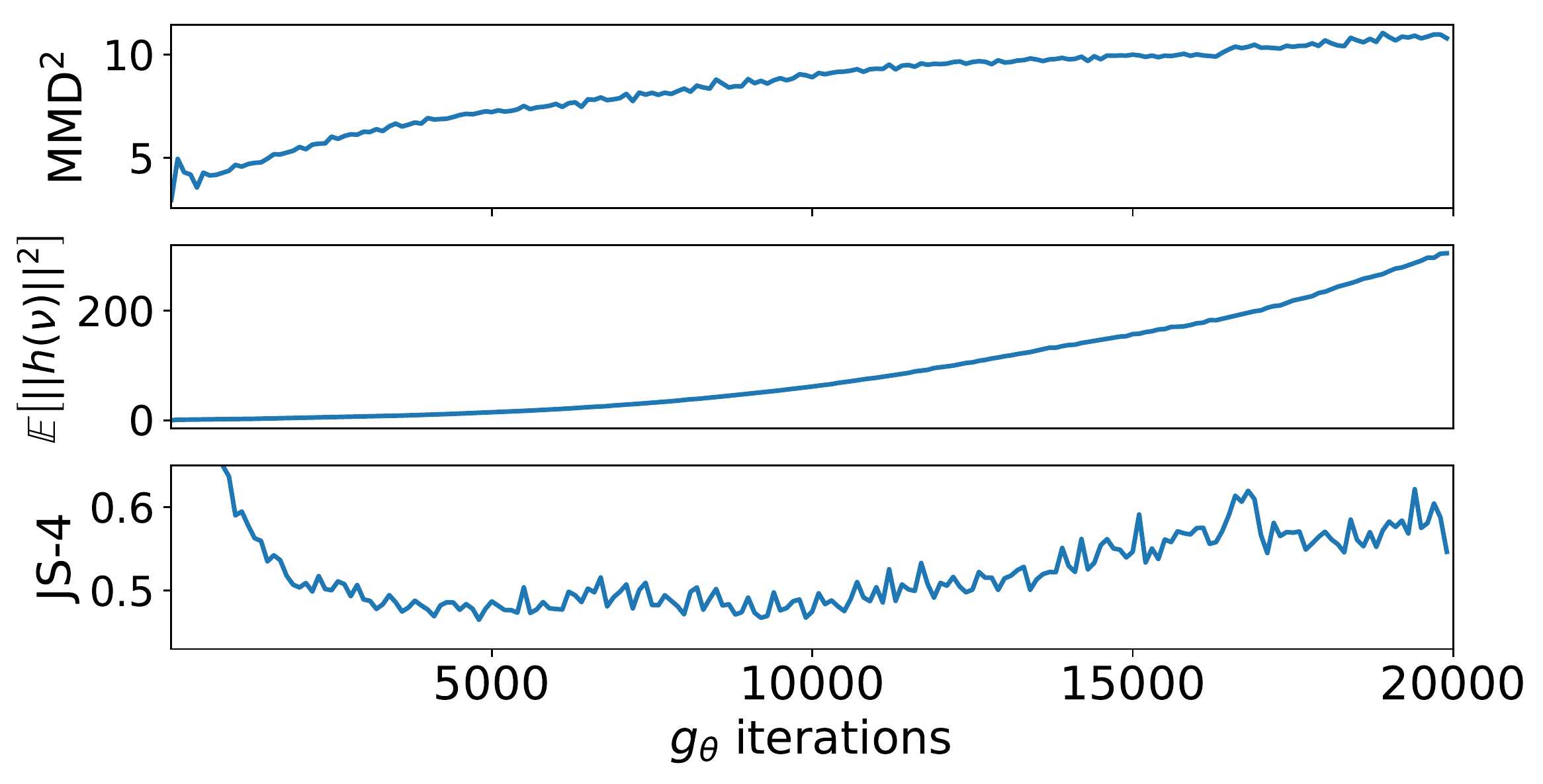}
   \caption{ Learning MMD GAN (IKL) without the variance constraint on Google Billion Words datasets for text generation.  }
   \label{fig:novar}
\end{figure}

\paragraph{Other Studies}
One concern of the proposed IKL is the computational overhead introduced by sampling random features as well as using
more parameters to model $h_\psi$. 
Since we only use small network to model $h_\psi$, the increased computation overhead is almost negligible under GPU
parallel computation. The detailed comparison can be found in Appendix~\ref{sec:comp}.
We also compare IKL with \citet{bullins2017not}, which can
be seen as a variant of IKL without $h_\psi$, and studt the variance constraint. Those additional discussions can be found in
Appendix~\ref{sec:addition}.

\section{Random Kitchen Sinks with IKL}
\label{sec:classification}
\citet{rahimi2009weighted} propose \emph{Random Kitchen Sinks} (RKS) as follows.
We sample $\omega_i\sim \PP_k(\omega)$ and transform $x\in\RR^d$
into $\hat{\phi}(x)=[\phi(x; \omega_1),\dots,\phi(x; \omega_M)]$, where $\sup_{x,\omega} |\phi(x; \omega)| < 1$. 
We then
learn a classifier on the transformed features $\hat{\phi}(x;\omega)$. 
Kernel methods with random features~\citep{Rahimi_NIPS_07} is an example of RKS,
where $\PP_k(\omega)$ is the spectral distribution of the kernel and
$\phi(x;\omega) = \big[ \cos(\omega^\top x), \sin(\omega^\top x) \big]$.
We  usually learn a model $\wb$ by solving 
\begin{equation}
	\arg\min_\wb \frac{\lambda}{2}\|\wb\|^2 + \frac{1}{n}\sum_{i=1}^n \ell\left( \wb^\top \hat{\phi}(x_i) \right).
	\label{eq:erm}
\end{equation}
If $\ell$ is a convex loss function, the objective \eqref{eq:erm} can be solved efficiently to global optimum.

Spectral distributions $\PP_k$ are usually set as a parameterized form, such as Gaussian distributions, but the 
selection of $\PP_k$ is important in practice. 
If we consider RKS as kernel methods with random features, then selecting $\PP$ is equivalent to
the well-known kernel selection (learning) problem for supervised learning~\citep{gonen2011multiple}. 

\paragraph{Two-Stage Approach}
We follows~\citet{sinha2016learning} to consider kernel learning for RKS with a two-stage approach. 
In stage 1, we consider kernel
alignment~\citep{cristianini2002kernel} of the form,
$\argmax_{k\in\Kcal} \EE_{(x,y), (x',y')}\sum_{i\neq j} yy' k(x, x')$.
By parameterizing $k$ via the implicit generative model $h_\psi$ as in Section~\ref{sec:kernel},
we have the following problem:
\begin{equation}
	\argmax_{\psi} \EE_{(x,y), (x',y')} yy'\EE_\nu\left[ 
	e^{ih_\psi(\nu)^\top(x-x') },
	\right].
	\label{eq:k_align}
\end{equation}
which can be treated as~\eqref{eq:kernel_spec} with $F_1(x, x')=yy'$.
After solving~\eqref{eq:k_align}, we learn a \emph{sampler} $h_\psi$ where we can easily sample.
Thus, in stage 2, we thus have the advantage of solving a convex problem~\eqref{eq:erm} in RKS with IKL.
The algorithm is shown in Algorithm~\ref{alg:super_IKL}.

\begin{algorithm}
	\caption{Random Kitchen Sinks with IKL} 
    \label{alg:super_IKL}
	\begin{algorithmic}
	\STATE {\bf Stage 1: Kernel Learning }
	\STATE {\bfseries Input: }$X=\{(x_i, y_i)\}_{i=1}^n$, the batch size $B$ for data and $m$
	for random feature, learning rate $\eta$
	\STATE {Initial parameter $\psi$ for $h$}
	\WHILE {$\psi$ has not converged or reach maximum iters}
		\STATE{Sample $\{(x_i, y_i)\}_{i=1}^B \subseteq X. \ $
        Fresh sample $\{\nu_j\}_{j=1}^m \sim \PP(\nu)$}
		\STATE{ $g_\psi \leftarrow \nabla _\psi \frac{1}{B(B-1)}\sum_{i\neq i'} y_iy_{i'} \frac{1}{m}\sum_{j=1}^m
		e^{ih_\psi(\nu_j)^\top(x_i-x_{i'}))} $ }
		\STATE{ $\psi \leftarrow \psi - \eta\text{Adam}(\psi, g_\psi) $ }
	\ENDWHILE

	\STATE {\bf Stage 2: Random Kitchen Sinks}
	\STATE{ Sample $\{\nu_i\}_{i=1}^M\sim \PP(\nu)$, note that $M$ is not necessarily equal to $m$}
	\STATE {Transform $X$ into $\phi(X)$ via $h_\psi$ and $\{\nu_i\}_{i=1}^M$}
	\STATE Learn a linear classifier on $(\phi(X), Y)$
	\end{algorithmic}
\end{algorithm}
Note that in stage 1, we resample $\{\nu_j\}_{j=1}^m$ in every iteration to train an implicit generative model $h_\psi$. 
The advantage of Algorithm~\ref{alg:super_IKL} is the random features used in kernel learning and RKS 
can be \emph{different}, which allows us to use \emph{less} random features in kernel learning (stage 1), and
sample \emph{more} features for RKS (stage 2). 

One can also \emph{jointly} train both feature mapping $\omega$ and the
model parameters $\wb$, such as neural networks.
We remark that our intent is not to show state-of-the-art results on supervised learning, 
on which deep neural networks dominate~\citep{krizhevsky2012imagenet, he2016deep}.
We use RKS as a protocol to study kernel learning and the proposed IKL, which still has competitive performance with
neural networks on some tasks~\citep{rahimi2009weighted, sinha2016learning}. 
Also, the simple procedure of RKL with IKL allows us to provide some theoretical guarantees of the performance, which is
sill challenging of deep learning models. 

\paragraph{Comparison with Existing Works}
\citet{sinha2016learning} learn non-uniform weights
for $M$ random features via kernel alignment in stage 1 then using these optimized features in RKS in the stage 2. 
Note that the random features used in stage 1 has to be the same as the ones in stage 2.
A jointly training of feature mapping and classifier can be treated as a
2-layer neural networks~\citep{buazuavan2012fourier, alber2017empirical, bullins2017not}. 
Learning kernels with aforementioned works will be more costly if we want to use a
large number of random features for training classifiers. 
In contrast to implicit generative models, ~\citet{oliva2016bayesian} learn an explicit Bayesian nonparametric generative
model for spectral distributions, which requires specifically designed inference algorithms.
Learning kernels for~\eqref{eq:erm} in dual form without random features has also been proposed. 
It usually require  costly steps, such as eigendecomposition of the Gram matrix~\citep{gonen2011multiple}.

\subsection{Empirical Study}
We evaluate the proposed IKL on both synthetic and benchmark binary classification tasks.
For IKL, $\PP(\nu)$ is standard Normal and $h_\psi$ is a $3$-layer MLP for all experiments.
The number of random features $m$ to train $h_\psi$ in Algorithm~\ref{alg:super_IKL} is fixed to be $64$. 
Other experiment details are described in Appendix~\ref{sec:classification_hyp}.

\begin{figure}
    \vspace{-1.0em}
    \begin{center}
    \begin{subfigure}[t]{0.475\linewidth}
      \includegraphics[width=\linewidth]{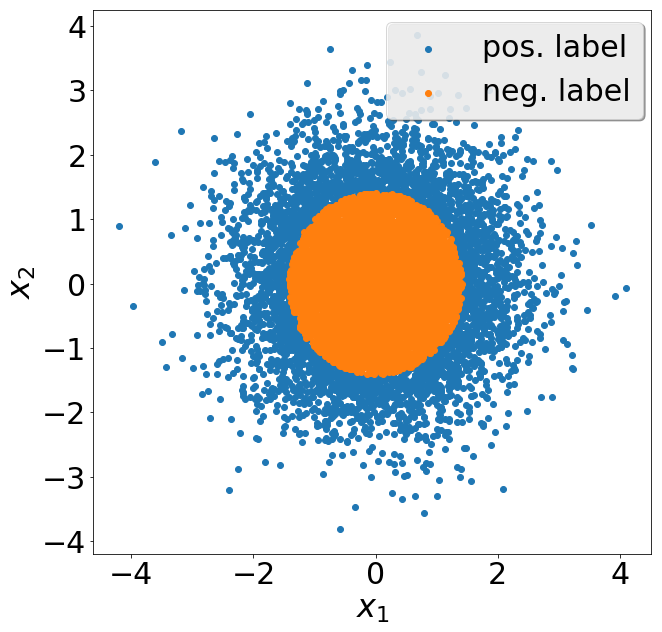}
    \end{subfigure}
    \hfill%
    \begin{subfigure}[t]{0.475\linewidth}
      \includegraphics[width=\linewidth]{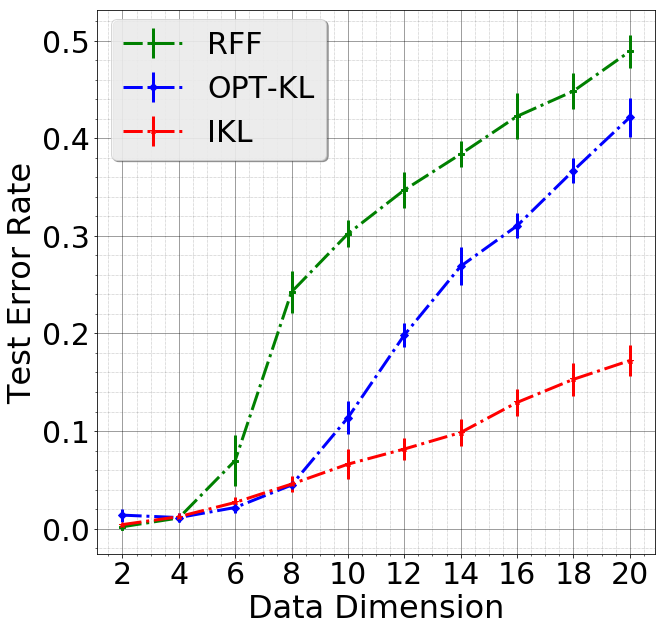}
    \end{subfigure}
    \end{center}
    \caption{Left figure is training examples when $d=2$.
    Right figure is the classification error v.s. data dimension.}
    \label{fig:exp-spv-fake}
\end{figure}

\paragraph{Kernel learning with a poor choice of $\PP_k(\omega)$}
We generate $\{x_i\}_{i=1}^n \sim \Ncal(0, I_d)$ with $y_i = \text{sign}(\|x\|_2 - \sqrt{d})$, where $d$ is the data
dimension. A two dimensional example is shown in Figure~\ref{fig:exp-spv-fake}.
Competitive baselines include random features (RFF) \citep{Rahimi_NIPS_07} as well as OPT-KL \citep{sinha2016learning}.
In the experiments, we fix $M=256$ in RKS for all algorithms. 
Since Gaussian kernels with the bandwidth $\sigma=1$ is known to be ill-suited for this task~\citep{sinha2016learning},
we  directly use random features from it for RFF and OPT-KL. Similarly, we set $\PP(\nu)$ to be standard normal
distribution as well.

The test error for different data dimension $d=\{2,4,\ldots,18,20\}$ is shown in Figure \ref{fig:exp-spv-fake}.
Note that RFF is competitive with OPT-KL and IKL when $d$ is small ($d \leq 6$),
while its performance degrades rapidly as $d$ increases, which is consistent with the observation
in~\citet{sinha2016learning}.
More discussion of the reason of failure can be referred to~\citet{sinha2016learning}.
On the other hand, although using standard normal as the spectral distribution is ill-suited for this task,  
both OPT-KL and IKL can adapt with data and learn to transform it into effective kernels and result in slower degradation with $d$. 

Note that OPT-KL learns the \emph{sparse} weights on the sampled random features ($M=256$).
However, the sampled random features can fail to contain informative ones, especially in high dimension~\citep{bullins2017not}.
Thus, when using limited amount of random features, OPT-IKL may result in worse performance than IKL in the high
dimensional regime in Figure~\ref{fig:exp-spv-fake}.

\paragraph{Performance on benchmark datasets}
Next, we evaluate our IKL framework on standard benchmark binary classification tasks.
Challenging label pairs are chosen from MNIST \citep{lecun1998gradient}
and CIFAR-10 \citep{krizhevsky2009learning} datasets;
each task consists of $10000$ training and $2000$ test examples.
For all datasets, raw pixels are used as the feature representation.
We set the bandwidth of RBF kernel by the median heuristic.
We also compare with~\citet{wilson2013gaussian}, the spectral mixture (SM) kernel, which uses Gaussian mixture
to learn spectral density and can be seen as the explicit generative model counterpart of IKL.
Also, SM kernel is a MKL variant with linear combination~\citep{gonen2011multiple}.
In addition, we consider 
the joint training of random features and model parameters, which can be treated as two-layer neural network (NN) and
serve as the lower bound of error for comparing different kernel learning algorithms.  

The test error versus different $M=\{2^6, 2^7, \ldots, 2^{13}\}$ in the second stage are shown in Figure \ref{fig:exp-spv-real}.
First, in light of computation efficiency, SM and the proposed IKL only sample
$m=64$ random features in each iteration in the first stage, and draws different
number of basis $M$ from the learned $h_{\psi}(\nu)$ for the second stage.
OPT-KL, on the contrary, the random features used in training and testing should be the same. 
Therefore, OPT-IKL needs to deal with $M$ random features in the training. It brings computation concern when $M$ is
large. 
In addition, IKL demonstrates improvement over 
the representative kernel learning method OPT-KL, 
especially significant on the challenging datasets such as CIFAR-10.
In some cases, IKL almost reaches the performance of NN, such as MNIST,
while OPT-KL degrades to RFF except for small number of basis $(M=2^6)$.
This illustrates the effectiveness of learning kernel spectral distribution
via the implicit generative model $h_{\psi}$. 
Also, IKL outperforms SM, which is consistent with the finding in Section~\ref{sec:gan} that IKL
can learn more complicated spectral distributions than simple mixture models (SM).

\begin{figure*}[t]
	\centering
    \begin{subfigure}[t]{0.245\linewidth}
    	\includegraphics[width=\linewidth]{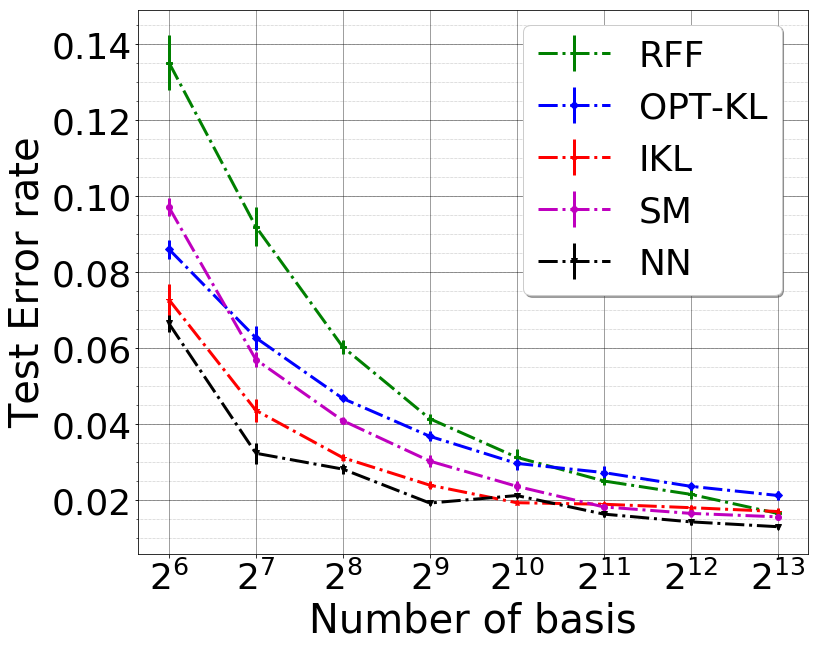}
        \caption{MNIST (4-9)}
    \end{subfigure}
    \hfill%
    \begin{subfigure}[t]{0.245\linewidth}
    	\includegraphics[width=\linewidth]{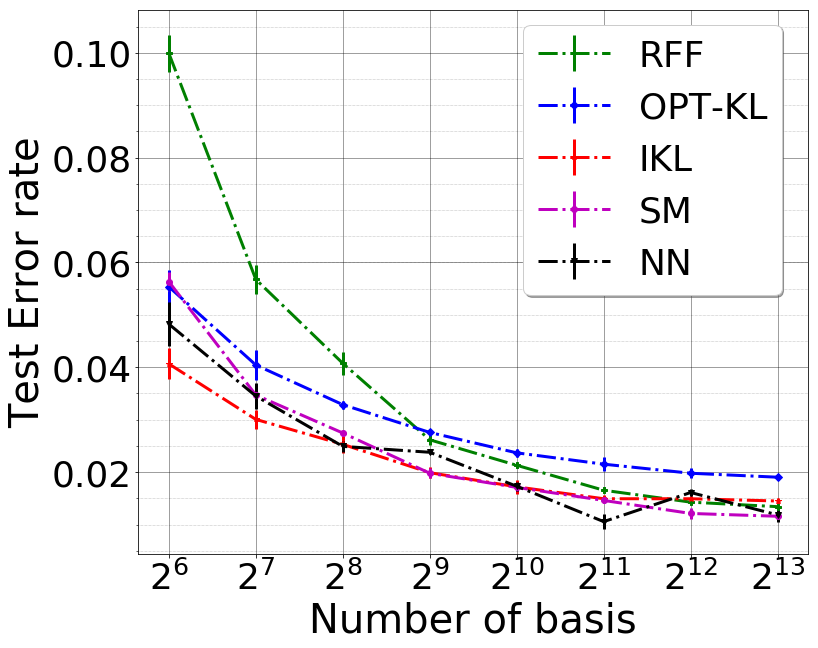}
        \caption{MNIST (5-6)}
    \end{subfigure}
    \begin{subfigure}[t]{0.245\linewidth}
    	\includegraphics[width=\linewidth]{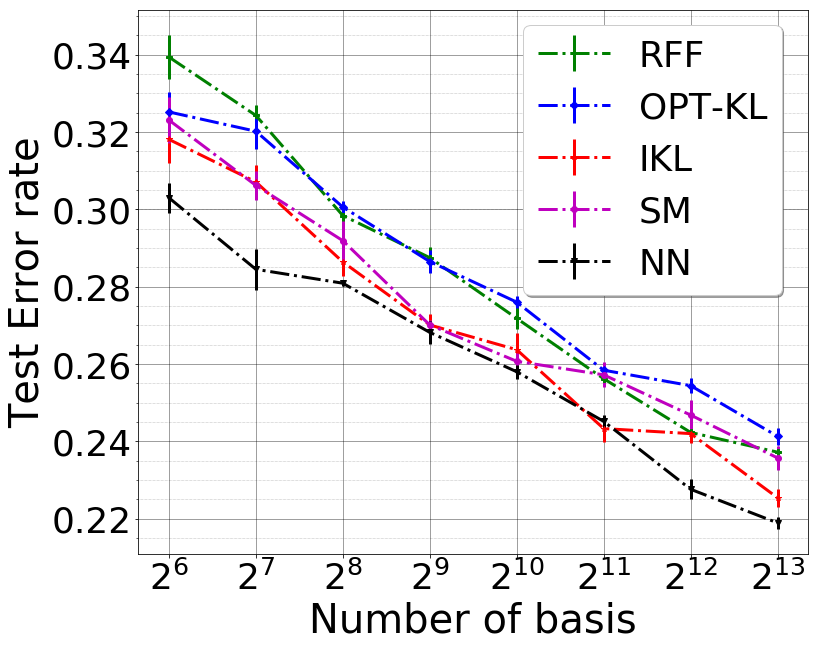}
        \caption{CIFAR-10 (auto-truck)}
    \end{subfigure}
    \hfill%
    \begin{subfigure}[t]{0.245\linewidth}
    	\includegraphics[width=\linewidth]{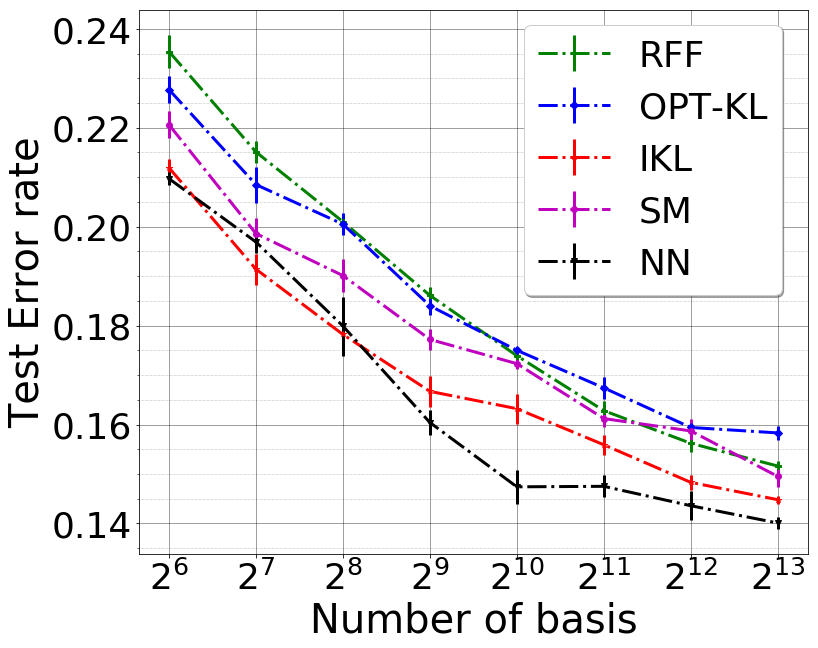}
        \caption{CIFAR-10 (plane-bird)}
    \end{subfigure}

    \caption{Test error rate versus number of basis in second stage on benchmark binary classification tasks.
	We report mean and standard deviation over five runs.
    Our method (IKL) is compared with RFF~\citep{rahimi2009weighted}, OPT-KL~\citep{sinha2016learning}, SM~\citep{wilson2013gaussian} and the end-to-end training MLP (NN).}
	\label{fig:exp-spv-real}
\end{figure*}

\subsection{Consistency and Generalization}
The simple two-stages approach, IKL with RKS, allows us to provide the consistency and generalization
guarantees. For consistency, it guarantees the solution of finite sample approximations of ~\eqref{eq:k_align}
approach to the optimum of ~\eqref{eq:k_align} (population optimum), 
when we increase number of training data and number of random features. We firstly define necessary symbols and state the theorem. 

Let $s(x_i, x_j)=y_iy_j$ be a label similarity function, where $|y_i|\leq 1$. We use $s_{ij}$ to denote $s(x_i, x_j)$
interchangeably. Given a kernel $k$, we define the true and empirical alignment functions as,
\[
\begin{array}{ccl}
	T(k) & = & \EE\left[ s(x, x')k(x, x') \right] \\
	\hat{T}(k) & = & \frac{1}{n(n-1)}\sum_{i\neq j}s_{ij}k(x_i, x_j). \\
\end{array}
\]

In the following, we abuse the notation $k_\psi$ to be $k_h$ for ease of illustration.
Recall the definitions of $k_h(x,x')=\langle\phi_h(x), \phi_h(x')  \rangle$ and
$\hat{k}_h(x, x')=\hat{\phi}_h(x)^\top\hat{\phi}_h(x')$.
We define two hypothesis sets
\[
	\begin{array}{l}
		\displaystyle \Fcal_\Hcal  = \{ f(x) = \langle w, \phi_h(x) \rangle_H | h\in \Hcal, \langle w, w\rangle \leq 1 \} \\
		\displaystyle \hat{\Fcal}_\Hcal^m  =  \{ f(x) = w^\top\hat{\phi}_h(x)  | h\in \Hcal, \|w\| \leq 1, w\in \RR^m \}. 
	\end{array}
\]
\begin{definition}(Rademacher's Complexity)
Given a hypothesis set $\Fcal$, where $f: \Xcal\times \Xcal \rightarrow \RR$ if $f\in\Fcal$, and a fixed sample $X = \{x_1, \dots, x_n\}$, 
the empirical Rademacher's complexity of $\Fcal$ is defined as 
\[
	\mathfrak{R}_X^n(\Fcal) = \frac{1}{n} \EE_\sigma\left[ \sup_{f\in \Fcal} \sum_{i=1}^n \sigma_i f(x_i) \right],
\]
where $\sigma$ are $n$ i.i.d. Rademacher random variables.
\end{definition}

We then have the following theorems showing that the consistency guarantee 
depends on the complexity of the function class induced by IKL as well as the number of random features.
The proof can be found in Appendix~\ref{sec:pf_consist}.
\begin{theorem} (Consistency)
Let $\hat{h} = \arg\max_{h\in\Hcal} \hat{T}(\hat{k}_h)$, with i.i.d, samples $\{\nu_i\}_{i=1}^m$ drawn from
$\PP(\nu)$. With probability at least $1-3\delta$, we have  $|T(\hat{k}_{\hat{h}}) - \sup_{h\in \Hcal} T(k_h) | \leq $
\[
\begin{array}{c}
\displaystyle	2\EE_X\left[ \mathfrak{R}_X^{n-1}(\Fcal_\Hcal) + \mathfrak{R}_X^{n-1}(\hat{\Fcal}_\Hcal^m) \right]+
\sqrt{\frac{8\log\frac{1}{\delta}}{n}} + \sqrt{\frac{2\log\frac{4}{\delta}}{m}}. 
\end{array}
\]
\label{thm:consist}
\end{theorem}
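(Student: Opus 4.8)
The plan is a two-source uniform-convergence argument, separating the error due to the finite data sample from the error due to using only $m$ random features. Let $h^\star$ attain $\sup_{h\in\Hcal}T(k_h)$ (if the sup is not attained, take an $\epsilon$-maximizer and let $\epsilon\to 0$ at the end). Using the optimality $\hat T(\hat k_{\hat h})\ge \hat T(\hat k_{h^\star})$ of $\hat h$, together with telescoping insertions of $\pm\hat T(\hat k_h)$ and $\pm\hat T(k_h)$, one gets for both signs
\[
\big|T(\hat k_{\hat h})-\sup_{h\in\Hcal}T(k_h)\big| \;\le\; \underbrace{\sup_{h\in\Hcal}\big|\hat T(k_h)-T(k_h)\big|}_{(\mathrm{I})} \;+\; \underbrace{\sup_{h\in\Hcal}\big|\hat T(\hat k_h)-\hat T(k_h)\big|}_{(\mathrm{II})} \;+\; \underbrace{\sup_{h\in\Hcal}\big|\hat T(\hat k_h)-T(\hat k_h)\big|}_{(\mathrm{III})}.
\]
Here $(\mathrm{I})$ is the finite-sample deviation of the alignment computed with the exact kernel, $(\mathrm{III})$ the same with the $m$-feature kernel $\hat k_h$, and $(\mathrm{II})$ the random-feature approximation error measured on the training set.

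For $(\mathrm{I})$ and $(\mathrm{III})$ I would exploit the identity $s_{ij}k_h(x_i,x_j)=\langle y_i\phi_h(x_i),\,y_j\phi_h(x_j)\rangle$ (and the analogue with $\hat\phi_h$), which rewrites $\hat T(k_h)$ as $\|\tfrac1n\sum_i y_i\phi_h(x_i)\|^2$ up to an $O(1/n)$ diagonal correction that is lower order because $k_h(x,x)=\hat k_h(x,x)=1$; consequently $|\hat T(k_h)-T(k_h)|$ is controlled by $\sup_{f\in\Fcal_\Hcal}|\tfrac1n\sum_i y_i f(x_i)-\EE[yf(x)]|$ and $(\mathrm{III})$ by the same object over $\hat\Fcal_\Hcal^m$ (conditionally on $\{\nu_l\}$). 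Because $\hat T$ is a second-order $U$-statistic, I reduce it to an i.i.d.\ empirical process via the standard conditioning/exchangeability decomposition — this is exactly what yields Rademacher complexities at effective sample size $n-1$ — and then apply McDiarmid's inequality (each summand $s_{ij}k_h(x_i,x_j)$ lies in $[-1,1]$, so the bounded-difference constants are $O(1/n)$) followed by symmetrization. This produces, each with probability $1-\delta$, $(\mathrm{I})\le 2\EE_X\mathfrak R_X^{n-1}(\Fcal_\Hcal)+\sqrt{2\log(1/\delta)/n}$ and $(\mathrm{III})\le 2\EE_X\mathfrak R_X^{n-1}(\hat\Fcal_\Hcal^m)+\sqrt{2\log(1/\delta)/n}$; adding the two deviation terms and using $a+b\le\sqrt{2(a^2+b^2)}$ gives the $\sqrt{8\log(1/\delta)/n}$ in the statement.

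For $(\mathrm{II})$, conditionally on $X$ one writes $\hat T(\hat k_h)-\hat T(k_h)=\tfrac1m\sum_{l=1}^m\big(A_h(\nu_l)-\EE_\nu A_h(\nu)\big)$ with $A_h(\nu)=\tfrac1{n(n-1)}\sum_{i\ne j}s_{ij}\,\phi(x_i;h(\nu))^\top\phi(x_j;h(\nu))$, which is bounded by a constant thanks to $\sup_{x,\omega}|\phi(x;\omega)|<1$ and $|s_{ij}|\le 1$. Viewing $(\nu_1,\dots,\nu_m)$ as the random argument, the map $(\nu_l)_l\mapsto\sup_{h\in\Hcal}|\tfrac1m\sum_l(A_h(\nu_l)-\EE A_h)|$ has bounded differences $O(1/m)$, so McDiarmid plus a bound on its expectation gives $(\mathrm{II})\le\sqrt{2\log(4/\delta)/m}$ with probability $1-\delta$. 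A union bound over the three events then yields the claimed $1-3\delta$ probability.

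The main obstacle is the interplay of the two independent randomness sources together with the $U$-statistic structure: terms $(\mathrm{I})$ and $(\mathrm{III})$ must be treated conditionally on $\{\nu_l\}$ and $(\mathrm{II})$ conditionally on $X$, and the order-two $U$-statistic $\hat T$ must be decoupled into i.i.d.\ sums before symmetrization so that the complexities come out at sample size $n-1$. The other delicate point is $(\mathrm{II})$: getting the uniform-over-$h$ control to collapse to a clean $O(1/\sqrt m)$ term — rather than also dragging in a Rademacher complexity over the $\nu_l$'s — relies on the exact cancellation of the diagonal terms ($k_h(x,x)=\hat k_h(x,x)=1$) and on the uniform boundedness of the feature map, and checking these carefully is where most of the work sits.
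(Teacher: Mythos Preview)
Your treatment of terms $(\mathrm{I})$ and $(\mathrm{III})$ is essentially the paper's: McDiarmid on the $U$-statistic followed by symmetrization, landing on $2\EE_X[\mathfrak R_X^{n-1}(\cdot)]+\sqrt{2\log(1/\delta)/n}$ for each of the two hypothesis classes. That part is fine.

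The gap is in $(\mathrm{II})$. Your middle term is the \emph{uniform} deviation $\sup_{h\in\Hcal}\big|\hat T(\hat k_h)-\hat T(k_h)\big|$. McDiarmid in the $\nu_l$'s only concentrates this quantity around its expectation $\EE_{\nu_{1:m}}\big[\sup_{h\in\Hcal}|\tfrac1m\sum_l (A_h(\nu_l)-\EE_\nu A_h)|\big]$, and that expectation is a Rademacher-type complexity of the class $\{A_h:h\in\Hcal\}$ at sample size $m$, not zero. The diagonal cancellation $k_h(x,x)=\hat k_h(x,x)=1$ and the boundedness $\sup_{x,\omega}|\phi(x;\omega)|<1$ only give you the bounded-difference constants; they do nothing to kill this expectation. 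So along your route $(\mathrm{II})$ cannot collapse to the bare $\sqrt{2\log(4/\delta)/m}$ appearing in the statement --- an extra complexity term in $m$ would be unavoidable.

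The paper sidesteps this by choosing a \emph{weaker} middle term in the decomposition: instead of $\sup_h|\hat T(\hat k_h)-\hat T(k_h)|$ it inserts $\big|\sup_h\hat T(k_h)-\sup_h\hat T(\hat k_h)\big|$ (recall $\hat T(\hat k_{\hat h})=\sup_h\hat T(\hat k_h)$). A difference of two suprema is controlled by the pointwise gap at the two maximizers $h^\star=\arg\max_h\hat T(k_h)$ and $\hat h$, so a union of two Hoeffding bounds in the $\nu_l$'s suffices --- this is exactly what produces the $4$ inside the logarithm. No uniform-over-$\Hcal$ control in the random-feature direction is ever needed, and that is the device your proposal is missing.
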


Applying~\cite{cortes2010generalization},  
We also have a generalization bound, which depends number of training data $n$,
number of random features $m$ and the Rademacher complexity of IKL kernel, as shown in Appendix~\ref{sec:general}.
The Rademacher complexity $\mathfrak{R}_X^{n}(\Fcal_\Hcal)$, for example, can be $1/\sqrt{n}$ or even
$1/n$ for kernels with different bounding conditions~\citep{cortes2013learning}. 
We would expect worse rates for more powerful kernels.
It suggests the trade-off between consistency/generalization and using powerful kernels parametrized by neural networks.

\section{Discussion}
\label{sec:discussion}

We propose a generic kernel learning algorithm, IKL, which
learns sampling processes of kernel spectral distributions by transforming samples from a base distribution $\PP(\nu)$
into ones for the other kernel (spectral density). 
We compare IKL with other algorithms for learning MMD GAN and supervised learning with Random Kitchen Sinks (RKS). 
For these two tasks, the conditions and guarantees of IKL for are studied.
Empirical studies show IKL is better than or competitive with the state-of-the-art kernel learning algorithms.
It proves IKL can learn to transform $\PP(\nu)$ into effective kernels even if $\PP(\nu)$ is less 
less favorable to the task.

We note that the preliminary idea of IKL is mentioned in~\cite{buazuavan2012fourier}, but they ended up with a
algorithm that directly optimizes sampled random features (RF), which has many follow-up works (\eg~\citet{sinha2016learning,
bullins2017not}). The major difference is, by learning the transformation function~$h_\psi$, the RF used in
training and evaluation can be different. This flexibility allows a simple training algorithm (SGD) and does not require
to keep learned features. In our studies on GAN and RKS, we show using a simple MLP can already achieve better
or competitive
performance with those works, which suggest IKL can be a new direction for kernel learning and worth more studies.

We highlight that IKL is not conflict with existing works but can be combined with them. 
In Section~\ref{sec:gan}, we show combining IKL with kernel learning via embedding~\citep{wilson2016deep} and mixture
of spectral distributions~\citep{wilson2013gaussian}. 
Therefore, 
in addition to the examples shown in Section~\ref{sec:gan} and Section~\ref{sec:classification},
IKL is directly applicable to many existing works with kernel learning via embedding (\eg~\cite{dai2014scalable,
li2016utilize, wilson2016deep, al2016learning, arbel2018gradient, jean2018semi, chang2019kernel}). 
A possible extension is combining with 
Bayesian inference~\citep{oliva2016bayesian} under the framework similar to \citet{saatchi2017bayesian}.
The learned sampler from IKL can possibly provide an easier way to
do Bayesian inference via sampling.

\bibliography{main}
\bibliographystyle{apalike}

\newpage
\appendix
\onecolumn

\section{Proof of Theorem~\ref{thm:cont}}

We first show $\PP_n \xrightarrow[]{D} \PP$ then $\max_{\psi, \varphi}M_{\psi, \varphi}(\PP, \PP_n) \rightarrow 0$.
The results are based on~\citet{arbel2018gradient}, which leverages Corollary 11.3.4 of~\citet{dudley2018real}. 
Follows the sketch of \citet{arbel2018gradient}, the only thing we remain to show is proving $\|k_\psi(f_\varphi(x), \cdot)-k_\psi(f_\varphi(y),
\cdot)\|_{\Hcal_K}$ is Lipschitz. By definition, we know that
$\|k_\psi(f_\varphi(x), \cdot)-k_\psi(f_\varphi(y),\cdot)\|_{\Hcal_K} = 2(1- k_\psi(f_\varphi(x), f_\varphi(y)) )$.
Also, since $k_\psi(0)=1$ and $k_\psi(0) - k_\psi(x,x') \leq L_k\| 0-(x-x') \|$ (Lipschtiz assumption of $k_\psi$),
we have 
\[
	\|k_\psi(f_\varphi(x), \cdot)-k_\psi(f_\varphi(y),\cdot)\|_{\Hcal_K} \leq 2L_k\|f_\varphi(x)- f_\varphi(y)\| \leq 
	2L_k L \|x-y\|,
\]
where the last inequality is since $f_\varphi$ is also a Lipschitz function with Lipschitz constant $L$.

The other direction, $\max_{\psi, \varphi}M_{\psi, \varphi}(\PP, \PP_n) \rightarrow 0$ then $\PP_n \xrightarrow[]{D} \PP$, 
is relatively simple. Without loss of generality, we assume there exists $\psi'$ and $\phi'$ such  that $h_\psi$ and $f_\phi$ are identity
functions (up to scaling), which recover the Gaussian kernel $k$. Therefore, 
$\max_{\psi, \varphi}M_{\psi, \varphi}(\PP, \PP_n) \rightarrow 0$ implies $M_{\psi', \varphi'}(\PP,\PP_n) \rightarrow
0$, which completes the proof because MMD with any Gaussian kernel is weak~\citep{Gretton2012ktest}.

\subsection{Continuity}
\begin{lemma}(\citet{borisenko1992directional})
Define $\tau(x) = \max\{f(x, u) | u\in U\}$. If $f$ is locally Lipschitz in $x$, $U$ is compact and $\bigtriangledown f(x, u^*(x))$
exists, where $u^*(x) =  \arg\max_u f(x, u)$, then $\tau(x)$ is differentiable almost everywhere. 
\label{lem:max}
\end{lemma}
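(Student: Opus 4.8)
This is the classical Danskin/Rademacher-type statement about maxima of parametrized families, so the plan is to reduce it to two standard facts: that the pointwise maximum of a locally Lipschitz family is locally Lipschitz, and that a locally Lipschitz function on a finite-dimensional space is differentiable almost everywhere (Rademacher's theorem). The hypothesis that $\nabla f(x, u^*(x))$ exists is then only needed if one also wants the gradient identity $\nabla \tau(x) = \nabla_x f(x, u^*(x))$, which is the form actually used in the proof of Theorem~\ref{thm:cont} with $f(\theta,(\psi,\varphi)) = M_{\psi,\varphi}(\PP_\Xcal,\PP_\theta)$ and $U = \Psi\times\Phi$.

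First I would fix $x_0$ and a compact neighborhood $N \ni x_0$, and record that ``$f$ locally Lipschitz in $x$'' is meant in the \emph{uniform} sense: there is $L = L(x_0) < \infty$ with $|f(x,u) - f(x',u)| \le L\|x-x'\|$ for all $u \in U$ and all $x,x' \in N$ (this uniformity over $u$ is where compactness of $U$, together with joint continuity of $f$ that makes the max well defined, is used). Then for $x,x' \in N$ and any $u$, $\tau(x) \ge f(x,u) \ge f(x',u) - L\|x-x'\|$; taking the supremum over $u$ gives $\tau(x) \ge \tau(x') - L\|x-x'\|$, and symmetrically, so $\tau$ is $L$-Lipschitz on $N$, hence locally Lipschitz on all of $\RR^n$. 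Rademacher's theorem now gives differentiability of $\tau$ Lebesgue-almost everywhere, which is the stated conclusion. To obtain the gradient formula at a point $x$ where $\tau$ is differentiable and $\nabla_x f(x,u^*(x))$ exists, I would use, for an arbitrary direction $d$, the one-sided bound $\tau(x+td) - \tau(x) \ge f(x+td,u^*(x)) - f(x,u^*(x)) = t\langle \nabla_x f(x,u^*(x)), d\rangle + o(t)$, so $\langle \nabla\tau(x), d\rangle \ge \langle \nabla_x f(x,u^*(x)), d\rangle$; applying this with $-d$ and using that the (two-sided) derivative of $\tau$ is linear in $d$ forces equality for every $d$.

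The step I expect to be the only real obstacle is the uniform local Lipschitz bound: the lemma as stated asserts Lipschitzness of $f$ in $x$ pointwise in $u$, whereas the reduction to Rademacher needs a single constant valid for all $u$ on a neighborhood, and closing that gap requires compactness of $U$ plus joint continuity of $f$ (or simply taking the uniform version as the intended reading, which is exactly how the lemma is invoked for Theorem~\ref{thm:cont}, where Assumption~\ref{ass:cont} supplies Lipschitz constants independent of $\varphi$). A minor secondary point is that $u^*(x)$ need not be a unique or canonically chosen maximizer; this is irrelevant for the almost-everywhere differentiability claim, and for the gradient identity it is enough to select any element of $\arg\max_u f(x,u)$ at each of the a.e.-many good points $x$.
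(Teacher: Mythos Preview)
The paper does not prove this lemma: it is quoted as a known result from \citet{borisenko1992directional} and then \emph{applied} in the proof of Theorem~\ref{thm:cont}. So there is no ``paper's own proof'' to compare against; any comparison is between your argument and the standard literature.

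That said, your proposal is correct and is exactly the standard route: (i) use compactness of $U$ (plus joint continuity) to upgrade the pointwise-in-$u$ local Lipschitz hypothesis to a uniform one on a neighborhood, (ii) deduce local Lipschitzness of $\tau$ by the usual $\sup$ trick, (iii) invoke Rademacher for a.e.\ differentiability, and (iv) recover the envelope/Danskin gradient identity $\nabla\tau(x)=\nabla_x f(x,u^*(x))$ at points of differentiability via the one-sided comparison with $f(\cdot,u^*(x))$. You also correctly flag that the only delicate point is the uniformity of the Lipschitz constant over $u$, and that in the paper's application this is handled directly by Assumption~\ref{ass:cont} (Lipschitz constants independent of $\varphi$, compactness of $\Phi,\Psi$), so no extra work is needed there. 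Your remark that nonuniqueness of $u^*(x)$ is harmless for the a.e.\ statement, and that any maximizer suffices for the gradient identity at differentiability points, is also the right observation.
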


We are going to show 
\begin{equation}
	\max_{\psi,\varphi} M_{\psi,\varphi}(\PP_\Xcal, \PP_\theta) = \EE_{x,x'}[k_{\psi,\varphi}(x, x')] -2\EE_{x,z}[k_{\psi,\varphi}(x,g_\theta(z))]  + \EE_{z,z'}[ k_{\psi,\varphi}(g_\theta(z'), g_\theta(z))  ] 
	\label{eq:mmd_obj}
\end{equation}
is differentiable with respect to $\varphi$ almost everywhere by using the auxiliary Lemma~\ref{lem:max}. 
We fist show $\EE_{z,z'}[ k_{\psi,\varphi}(g_\theta(z'), g_\theta(z))  ]$ in~\eqref{eq:mmd_obj} is locally Lipschitz in
$\theta$. By definition, $k_{\psi,\varphi}(x, x') = k_\psi(f_\varphi(x )-f_\varphi( x' ) )$, therefore,
\[
\begin{aligned}
    & \displaystyle \EE_{x,x'} \bigg[ k_{\psi,\varphi}\Big( g_\theta(z), g_\theta(z') \Big) - k_{\psi,\varphi}\Big( g_{\theta'}(z), g_{\theta'}(z')  \Big) \bigg] \\
	= & \displaystyle \EE_{z,z'}\bigg[ k_\psi\Big( f_\varphi\big( g_\theta(z) \big) - f_\varphi\big( g_\theta(z') \big)\Big) \bigg] - 
	  \EE_{z,z'}\bigg[ k_\psi\Big( f_\varphi\big( g_{\theta'}(z) \big) - f_\varphi\big( g_{\theta'}(z') \big)\Big) \bigg] \\
   \leq & \displaystyle \EE_{z,z'}\bigg[ L_k \Big\Vert f_\varphi\big( g_\theta(z) \big) - f_\varphi\big( g_\theta(z') \big) - f_\varphi\big( g_{\theta'}(z) \big) + f_\varphi\big( g_{\theta'}(z') \big) \Big\Vert \bigg] \\
   \leq & \displaystyle \EE_{z,z'}\Big[ L_k L(\theta, z) \| \theta-\theta'  \| + L_k L(\theta, z') \| \theta-\theta'  \| \Big] \\
   = & 2L_k\EE_{z}\big[  L(\theta, z) \big]\| \theta-\theta'  \|. \\
\end{aligned}
\]
The first inequality is followed by the assumption that $k$ is locally Lipschitz in $(x,x')$, with a upper bound
$L_k$ for Lipschitz constants.
By Assumption~\ref{ass:cont}, $\EE_{z}\big[  L(\theta, z) \big]<\infty$,
we prove $\EE_{z,z'}\Big[ k_\psi \big( f_\varphi( g_\theta(z) ) - f_\varphi( g_\theta(z') ) \big) \Big]$ is locally Lipschitz. 
The similar argument is applicable to other terms in~\eqref{eq:mmd_obj}; therefore,~\eqref{eq:mmd_obj} is locally
Lipschitz in $\theta$.  

Last, with the compactness assumption on $\Phi$ and $\Psi$, and differentiable assumption on
$M_{\psi,\varphi}(\PP_\Xcal, \PP_\theta)$, applying Lemma~\ref{lem:max} proves Theorem~\ref{thm:cont}.

\section{Proof of Lemma~\ref{lem:var}}
Without loss of the generality, we can rewrite the kernel function as
$k_\psi(t) = \EE_\nu \Big[ \cos\big( h_\psi(\nu)^\top t \big) \Big]$,
where $t$ is bounded. We then have
\[
	\begin{array}{ccl}
	\|\nabla_t k_\psi(t)\| & = & \Big\Vert \EE_\nu\Big[ \sin\big( h_\psi(\nu)^\top t \big) h_\psi(\nu) \Big] \Big\Vert \\
	& \leq & \EE_\nu\Big[ \big\vert \sin(h_\psi(\nu)^\top t) \big\vert \times \|h_\psi(\nu)\| \Big] \\
	& \leq & \EE_\nu\big[ \|t\| \|h_\psi(\nu)\|^2 \big]
	\end{array}
\]
The last inequality follows by $|\sin(x)| < |x|$. Since $t$ is bounded, if $\EE_\nu[\|h_\psi(\nu)\|^2]<\infty$, there exist a
constant $L$ such that $\|\nabla_t k_\psi(t)\| < L, \forall t$. 

By mean value theorem, for any $t$ and $t'$, there exists $s=\alpha t + (1-\alpha)t'$, where $\alpha \in [0,1]$, such that
\[
	k_\psi(t) - k_\psi(t') = \nabla_{s} k_\psi( s )^\top (t-t'). 
\]
Combining with $\|\nabla_t k_\psi(t)\| < L, \forall t$, we prove 
\[
	k_\psi(t) - k_\psi(t')  \leq L\|t-t'\|.
\]

\section{Additional Studies of MMD GAN with IKL}
\label{sec:addition}
\subsection{Additional Quantitative Results}
We show the full quantitative results on MMD GANs with different kernels with mean and standard error 
in Table~\ref{tb:gan_full_results}.
In every tasks, IKL is the best among the predefined base kernels (Gaussian, RQ) and the competitive kernel learning
algorithm (SM).
The difference in FID is less significant than inception score and JS-4, but we note that FID score is a biased evaluation metric as discussed in~\citet{binkowski2018demystifying}.

\label{sec:addition}
\begin{table}[h]
	\centering
    \begin{tabular}{c|c|c|c}
        \toprule
        Method 		& Inception Scores $(\uparrow)$ & FID Scores $(\downarrow)$	& JS-4 $(\downarrow)$ \\
        \midrule
        Gaussian 	& $6.726 \pm 0.021$  			& $32.50 \pm 0.07$			& $0.381 \pm 0.003$\\
        RQ 			& $6.785 \pm 0.031$ 			& $32.20 \pm 0.09$			& $0.463 \pm 0.005$\\
		SM 			& $6.746 \pm 0.031$ 			& $32.43 \pm 0.08$						& $0.378 \pm 0.003$ \\
        IKL 		& $\mathbf{6.876 \pm 0.018}$ 	& $\mathbf{31.98 \pm 0.05}$ & $\mathbf{0.372 \pm 0.002}$\\
        \midrule
        WGAN-GP 	& $6.539 \pm 0.034$  			& $36.413 \pm 0.05$			& $0.379 \pm 0.002$ \\
        \bottomrule
    \end{tabular}
    \caption{Inception scores, FID scores, and JS-4 divergece results.}
    \label{tb:gan_full_results}
\end{table}
\subsection{Computational Issues of GAN trainings with IKL}
\label{sec:comp}
\paragraph{Model Capacity}
For $f_\varphi$, the number of parameters for DCGAN is around $0.8$ million for size $32\times 32$ images and $3$ millions
for size $64\times 64$ images. The ResNet architecture used in~\cite{gulrajani2017improved} has around $10$ millions
parameters.
In contrast, in all experiments, we use simple three layer MLP as $h_\psi$ for IKL, where the input and output dimensions are 16, and hidden
layer size is 32. The total parameters are just around 2,000. Compared with $f_\phi$, the additional number of
parameters used for $h_\psi$ is almost negligible.

\begin{wrapfigure}{r}{.5\textwidth}
      \includegraphics[width=\linewidth]{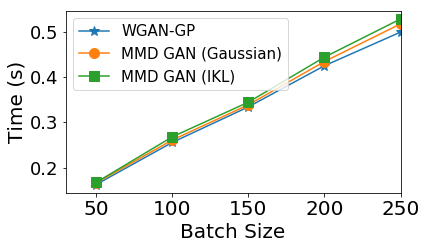}
\end{wrapfigure}
\paragraph{Computational Time}
The potential concern of IKL is sampling random features for each examples. In our experiments, we use
$m=1024$ random features for each iteration. 
We measure the time per iteration of updating critic iterations ($f$ for WGAN-GP and MMD GAN with Gaussian kernel; $f$
and $h$ for IKL) with different batch sizes under Titan X.
The difference between WGAN-GP, MMD GAN and IKL
are not significant.  The reason is computing MMD and random feature is highly parallelizable, and other computation, such as
evaluating $f_\phi$ and its gradient penalty, dominates the cost because $f_\phi$ has much more parameters as
aforementioned. Therefore, we believe the proposed IKL is still cost effective in practice.

\subsection{Detailed Discussion of Variance Constraints}
In Section~\ref{sec:gan}, we propose to constrain variance via  $\lambda_h(\EE_\nu\left[\|h_\psi(\nu)\|^2\right] -u
)^2$. There are other alternatives, such as constraining $L^2$ penalty or using Langrange. In practice, we do not
observe significant difference. 

Although we show the necessity of the variance constraint in language generation in Figure~\ref{fig:novar},
we remark that the proposed constraint is a sufficient condition. 
For CIFAR-10, without the constraint, we observe that the variance is still bouncing between $1$ and $2$
without explosion as Figure~\ref{fig:novar}.
Therefore, the training leads to a satisfactory result with $6.731 \pm 0.034$ inception score, but it is slightly 
worse than IKL in Table~\ref{tb:gan_results}.
The necessary or weaker sufficient conditions are worth further studying as a future work.

\subsection{IKL with and without Neural Networks on GAN training}
\label{sec:no_h}
Instead of learning a transform function $h_\psi$ for the spectral distribution
as we proposed in Section~\ref{sec:kernel} (IKL-NN), 
the other realization of IKL is to keep a pool of finite number learned random features  $\Omega = \{ \hat{\omega_i}\}_{i=1}^m$, and
approximate the kernel evaluation by $\hat{k}_\Omega(x, x')= \hat{\phi}_{\Omega}(x)^\top \hat{\phi}_{\Omega}(x')$,
where $\hat{\phi}_{\Omega}(x)^\top = [\phi(x; \hat{\omega}_1), \dots, \phi(x; \hat{\omega}_m)]$.
During the learning, it directly optimize $\hat{\omega_i}$.
Many existing works study this idea for supervised learning, such as \citet{buazuavan2012fourier, yang2015carte, sinha2016learning,
chang2017data, bullins2017not}.
We call the latter realization as IKL-RFF. Next, we discuss
and compare the difference between IKL-NN and IKL-RFF.

The crucial difference between IKL-NN and IKL-RFF is, IKL-NN can sample arbitrary number of random features
by first sampling $\nu \sim \PP(\nu)$ and transforming it via $h_{\psi}(\nu)$, while IKL-RFF is restricted by the pool size
$m$. If the application needs more random features, IKL-RFF will be memory inefficient. 
Specifically, we compare IKL-NN and IKL-RFF with different number of random features in Figure \ref{fig:ikl_nn_rff}.
With the same number of parameters (i.e., $|h_\psi| = m \times dim(\nu)$)\footnote{$|h_\psi|$ denotes
number of parameters in $h_\psi$, $m$ is number of random features and $dim(\nu)$ is the dimension of the $\nu$.}
, IKL-NN outperforms IKL-RFF of $m=128$ on Inception scores ($6.876$ versus $6.801$).
For IKL-RFF to achieve the same or better Inception scores of IKL-NN, the number of random features
$m$ needs increasing to $4096$, which is less memory efficient than the IKL-NN realization.
In particular, $h_\psi$ of IKL-NN is a three-layers MLP with $2048$ number of parameters
($16\times32 + 32\times32 + 32\times16$),
while IKL-RFF has $2048, 65536$ number of parameters, for $m=128,4096$, respectively.

\begin{table}[ht]
  \begin{minipage}[b]{0.5\linewidth}
    \centering
    \includegraphics[width=\textwidth]{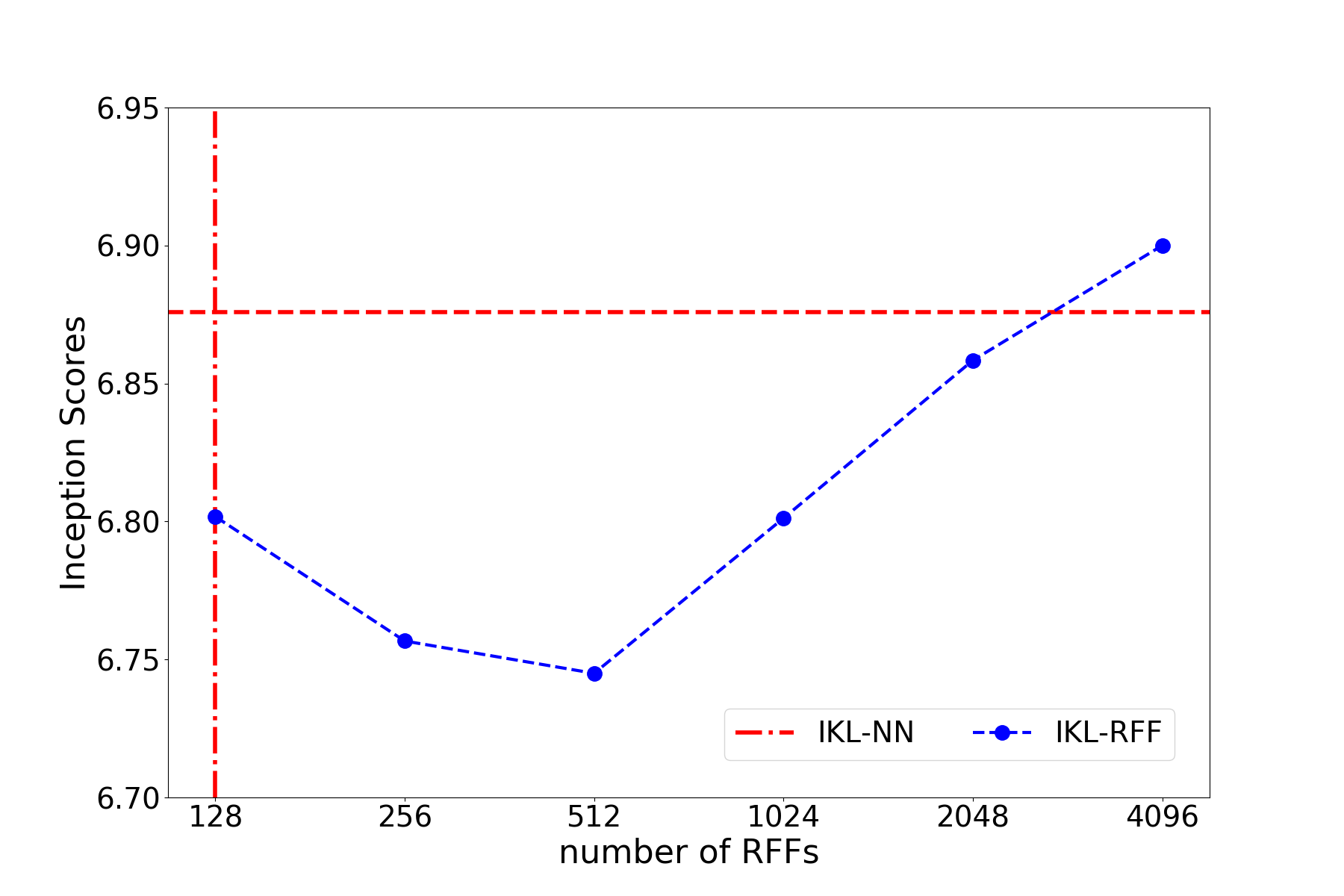}
    \captionof{figure}{The comparison between IKL-NN and IKL-RFF on CIFAR-10 under
    different number of random features.}
    \label{fig:ikl_nn_rff}
  \end{minipage}
  \hfill
  \begin{varwidth}[b]{0.45\linewidth}
    \centering
    \begin{tabular}{ l r r r }
      \toprule
      Algorithm & JS-4 \\
      \midrule
      IKL-NN & $0.372 \pm 0.002$ \\
      IKL-RFF & $0.383 \pm 0.002$ \\
      \midrule
	  IKL-RFF (+2) & $0.380 \pm 0.002$ \\
      IKL-RFF (+4) & $0.377 \pm 0.002$ \\
      IKL-RFF (+8) & $0.375 \pm 0.002$ \\
      \bottomrule
    \end{tabular}
    \vspace{1em}
    \caption{ The comparison between IKL-NN and IKL-RFF on Google Billion Word. }
    \label{tb:ikl_js4}
  \end{varwidth}%
\end{table}

On the other hand, using large $m$ for IKL-RFF not only increases the number of parameters, but might also enhance the
optimization difficulty. \cite{zhang2017hitting} discuss the difficulty of optimizing RFF directly on different tasks. 
Here we compare IKL-NN and IKL-RFF on challenging Google Billion Word dataset. We train IKL-RFF
with the same setting as Section~\ref{sec:gan_empirical} and Appendix~\ref{sec:gan_hyp}, where we set the pool size $m$
to be $1024$ and the updating schedule between critic and generator to be $10:1$, but we tune the Adam optimization
parameter for IKL-RFF for fair comparison. As discussed above, please note that the number of parameters for $h_\psi$ is
$2048$ while IKL-RFF uses $16384$ when $m=1024$. The results are shown in Table~\ref{tb:ikl_js4}. Even IKL-RFF is using more
parameters, the performance $0.383$ is not competitive as IKL-NN, which achieves $0.372$.

In Algorithm~\ref{alg:mmdgan_IKL}, we update $f_\varphi$ and $h$ in each iteration with $n_c$ times, where we use $n_c=10$
here. We keep the number of updating $f_\varphi$ to be $10$, but increase the number of update for
$\{\hat{\omega_i}\}_{i=1}^1024$ to be $12, 14, 18$ in each iteration. The result is shown in Table~\ref{tb:ikl_js4} with
symbols +2, +4 and +8 respectively. Clearly, we see IKL-RFF need more number of updates to achieve competitive
performance with IKL-NN. The results might implies IKL-RFF is a more difficult optimization problem with more parameters
than IKL-NN. It also confirms the effectiveness of learning implicit generative models with deep neural
networks~\cite{Goodfellow14GAN}, but the underlying theory is still an open research question. 
A better optimization algorithm~\cite{zhang2017hitting} may improve the performance gap between IKL-NN and IKL-RFF, which worth
more study as future work.

\section{Proof of Theorem~\ref{thm:consist}}
\label{sec:pf_consist}
We first prove two Lemmas. 

\begin{lemma} (Consistency with respect to data)
\label{lem:cons_data}
With probability at least $1-\delta$, we have
\[
	\sup_{h\in\Hcal}|\hat{T}(k_h)-T(k_h)| \leq 2\EE_X\bigg[\mathfrak{R}_X^{n-1}(\Fcal_\Hcal)\bigg] + \sqrt{\frac{2}{n}\log{\frac{1}{\delta}}}
\]
\end{lemma}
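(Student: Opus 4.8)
The plan is to bound $\sup_{h\in\Hcal}|\hat{T}(k_h)-T(k_h)|$ by the classical two-step recipe for data-dependent uniform deviations: a concentration inequality that reduces the random quantity to its expectation, followed by symmetrization that replaces the expectation by a Rademacher complexity. The only non-routine feature is that $\hat T$ is a degree-two $U$-statistic (a sum over pairs $i\neq j$), not an i.i.d.\ average, which is where the care goes.

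First I would record the elementary bounds that drive everything. Since $k_h(x,x')=\langle\phi_h(x),\phi_h(x')\rangle$ with $\|\phi_h(\cdot)\|=1$ (the kernel is normalized and shift-invariant, so $k_h(0)=1$) and $|s(x,x')|=|yy'|\le 1$, each summand obeys $|s_{ij}k_h(x_i,x_j)|\le 1$; moreover $s(x,x')k_h(x,x')=\langle y\phi_h(x),\,y'\phi_h(x')\rangle$, so fixing one argument $(x,y)$ turns $s(x,\cdot)k_h(x,\cdot)$ into the map $x'\mapsto y'\langle w,\phi_h(x')\rangle$ with $w=y\phi_h(x)$ and $\|w\|\le 1$, \ie $y'$ times an element of $\Fcal_\Hcal$. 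These two observations are the crux of the reduction below. Step~1 (concentration): apply a bounded-differences argument to $X=(x_1,\dots,x_n)\mapsto\sup_h|\hat T(k_h)-T(k_h)|$; replacing one $x_i$ perturbs $\hat T(k_h)$ uniformly in $h$ by $O(1/n)$ thanks to the $|s_{ij}k_h|\le1$ bound and the $1/(n(n-1))$ normalization, and using the sharper concentration available for $U$-statistics (effective sample size $n$, through the H\'ajek projection, rather than the $n/2$ of a crude block split) yields, with probability at least $1-\delta$,
\[
\sup_h|\hat T(k_h)-T(k_h)|\ \le\ \EE_X\!\left[\sup_h|\hat T(k_h)-T(k_h)|\right]+\sqrt{\tfrac{2}{n}\log\tfrac1\delta}.
\]

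Step~2 (symmetrization to Rademacher complexity): write $\hat T(k_h)=\frac1n\sum_{i=1}^n\frac{1}{n-1}\sum_{j\neq i}s_{ij}k_h(x_i,x_j)$ and observe that $\EE_{x'}[s(x_i,x')k_h(x_i,x')]$ has mean $T(k_h)$ over $x_i$. Introducing a ghost sample and averaging over permutations (the Hoeffding device rewriting a $U$-statistic as an average over permutations of i.i.d.\ block sums), one pulls $\sup_h$ through the permutation average by convexity, inserts Rademacher signs, and, after conditioning on one coordinate of each pair, is left with a genuine i.i.d.\ empirical average over the remaining $n-1$ points. Now invoke the crux observation: $\sum_j\sigma_j\,s_{ij}k_h(x_i,x_j)=\sum_j\sigma_j\,y_j\langle w_h,\phi_h(x_j)\rangle$ with $\|w_h\|\le1$, so the supremum over $h$ is dominated by the supremum over $\Fcal_\Hcal$; the contraction principle (each $t\mapsto y_jt$ is $1$-Lipschitz since $|y_j|\le1$, the absolute value handled by the standard two-sided comparison) strips the labels, leaving precisely $\mathfrak{R}_X^{n-1}(\Fcal_\Hcal)$. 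The usual factor $2$ from symmetrization then gives $\EE_X[\sup_h|\hat T(k_h)-T(k_h)|]\le 2\,\EE_X[\mathfrak{R}_X^{n-1}(\Fcal_\Hcal)]$, and combining with Step~1 proves the lemma.

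The main obstacle is Step~2: one cannot symmetrize the pair-sum with $n$ independent Rademacher variables directly, because the summands $s_{ij}k_h(x_i,x_j)$ are dependent across pairs, so the decoupling/Hoeffding reduction to i.i.d.\ sums — and the bookkeeping showing it costs only one unit of sample size, producing $\mathfrak{R}^{n-1}$ rather than $\mathfrak{R}^{n}$ — is the delicate part. The concentration step and the passage to $\Fcal_\Hcal$ via the bilinear form of $s\,k_h$ together with the contraction principle are routine once that reduction is in hand.
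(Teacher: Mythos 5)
Your proposal is correct and follows essentially the same route as the paper: McDiarmid's bounded-differences inequality (with perturbation $2/n$ per coordinate, yielding exactly the $\sqrt{\tfrac{2}{n}\log\tfrac{1}{\delta}}$ term) to reduce to the expectation, then ghost-sample symmetrization of the $U$-statistic with Rademacher signs, conditioning on one coordinate of each pair so that $s(x,\cdot)k_h(x,\cdot)$ lands (up to the bounded labels) in $\Fcal_\Hcal$, giving $2\,\EE_X[\mathfrak{R}_X^{n-1}(\Fcal_\Hcal)]$. Your use of the contraction principle to strip the labels is a slightly more careful rendering of the paper's remark that $|s(x,x')|\le 1$ places $s(x,x')\phi(x')$ in the unit ball, but it is not a different argument.
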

\begin{proof}
Define 
\[
\rho(x_1, \dots, x_n) = \sup_{h\in\Hcal}|\hat{T}(k_h)-T(k_h)|,
\] 
since $|k_h(x, x')|\leq 1$, it is clearly 
\[
\begin{array}{cl}
	\displaystyle \sup_{x_1,\dots,x_i,x_i',\dots, x_n} & |\rho(x_1, \dots,x_i,\dots x_n) - \\
	& \rho(x_1, \dots,x_i',\dots x_n)| \leq \frac{2}{n}.
\end{array}
\]

Applying McDiarmids Inequality, we get 
\[
	\PP\left( \rho(x_1, \dots, x_n)- \EE[\rho(x_1, \dots, x_n)] \geq \epsilon \right) \leq \exp\left( \frac{-n\epsilon^2}{2}
	\right).
\]

By Lemma~\ref{lem:sup}, we can bound 
\[
\EE[\rho(x_1, \dots, x_n)] \leq 2\EE_X\bigg[\mathfrak{R}_X^{n-1}(\Fcal_\Hcal)\bigg]
\]
and finish the proof.
\end{proof}

\begin{lemma}
\label{lem:sup}
Given $X=\{x_1,\dots,x_n\}$, define  
\[
\rho(x_1, \dots, x_n) = \sup_{h\in\Hcal}|\hat{T}(k_h)-T(k_h)|,
\] 
we have 
\[
	\EE\bigg[ \rho(x_1, \dots, x_n) \bigg] \leq 2\EE_X\bigg[ \mathfrak{R}_X^{n-1}(\Fcal_\Hcal) \bigg],
\]
\end{lemma}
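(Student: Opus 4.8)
The plan is to use the standard symmetrization argument for uniform deviation bounds, adapted to the fact that $\hat T(k_h) - T(k_h)$ is a $U$-statistic-like object (a double sum over $i \neq j$) rather than an ordinary empirical average. First I would introduce a ghost sample $X' = \{x_1', \dots, x_n'\}$ drawn i.i.d.\ from the same distribution and independent of $X$, and write $T(k_h) = \EE_{X'}[\hat T'(k_h)]$ where $\hat T'(k_h) = \frac{1}{n(n-1)}\sum_{i\neq j} s_{ij}' k_h(x_i', x_j')$. Then, pulling the supremum inside the expectation via Jensen,
\[
\EE_X\bigg[\sup_{h\in\Hcal}|\hat T(k_h) - T(k_h)|\bigg]
\le \EE_{X,X'}\bigg[\sup_{h\in\Hcal}\Big|\tfrac{1}{n(n-1)}\textstyle\sum_{i\neq j}\big(s_{ij}k_h(x_i,x_j) - s_{ij}'k_h(x_i',x_j')\big)\Big|\bigg].
\]

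The key step is to decouple the double sum. The standard trick (Hoeffding's averaging representation of a $U$-statistic) is to write $\frac{1}{n(n-1)}\sum_{i\neq j} a_{ij}$ as an average over partitions of $\{1,\dots,n\}$ into pairs, each partition giving a sum of $\lfloor n/2 \rfloor$ terms involving \emph{independent} summands; since the supremum of an average is at most the average of the suprema, it suffices to bound the deviation for a single such sum of independent terms. On this reduced object I would introduce Rademacher variables $\sigma_i$: because the summands across a pairing are independent and each pair $(x_i,x_j)$ appears symmetrically with its ghost counterpart, the usual symmetrization lemma applies and the bound becomes $\le \frac{2}{n-1}\EE_{X,\sigma}\big[\sup_{h}\big|\sum_{i} \sigma_i\, s(x_i, x_{\pi(i)}) k_h(x_i, x_{\pi(i)})\big|\big]$ over a fixed pairing $\pi$. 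Now I recognize $s(x_i, x_{\pi(i)}) k_h(x_i, x_{\pi(i)}) = \langle y_i \phi_h(x_i),\, y_{\pi(i)}\phi_h(x_{\pi(i)})\rangle_H$; conditioning on the "$\pi(i)$" coordinates and using $\|y_{\pi(i)}\phi_h(x_{\pi(i)})\|_H \le 1$ (since $|y|\le 1$ and $\phi_h$ normalizes, i.e.\ $k_h(x,x)=1$), the term $y_{\pi(i)}\phi_h(x_{\pi(i)})$ plays the role of a unit vector $w$ in the definition of $\Fcal_\Hcal$. Hence the inner supremum is dominated by $\sup_{f\in\Fcal_\Hcal}|\sum_i \sigma_i f(x_i)|$, which is exactly $n' \cdot \mathfrak{R}_X^{n'}(\Fcal_\Hcal)$ with $n' = \lfloor n/2\rfloor$ or, after the bookkeeping, $(n-1)\,\mathfrak{R}_X^{n-1}(\Fcal_\Hcal)$.

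Collecting the constants from the two symmetrization passes (the ghost-sample step and the Rademacher step) gives the factor $2$ and the index $n-1$, yielding $\EE[\rho] \le 2\,\EE_X[\mathfrak{R}_X^{n-1}(\Fcal_\Hcal)]$. The main obstacle I anticipate is the decoupling of the $i\neq j$ double sum: one must be careful that after rewriting it as an average over pairings, the Rademacher symmetrization is legitimate (the summands in each pairing are genuinely independent and exchangeable with their ghost copies), and that identifying $y_{\pi(i)}\phi_h(x_{\pi(i)})$ with the "$w$" of $\Fcal_\Hcal$ is uniform in $h$ — this works precisely because the constraint $\langle w,w\rangle \le 1$ in $\Fcal_\Hcal$ is exactly matched by $\|y\,\phi_h(x)\|_H \le 1$. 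Everything else (McDiarmid's bounded-differences input in Lemma~\ref{lem:cons_data}, the $\sqrt{2\log(1/\delta)/n}$ tail) is routine once this combinatorial-symmetrization core is in place.
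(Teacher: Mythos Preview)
Your overall architecture (ghost sample + Jensen, then Rademacher symmetrization, then identify one feature vector with the unit ball element $w$ in $\Fcal_\Hcal$) is exactly right, and it matches the paper's proof in spirit. The difference is in \emph{how} you break the double sum $\frac{1}{n(n-1)}\sum_{i\neq j}$ into a sum of independent terms, and there your bookkeeping does not come out as you claim.

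Hoeffding's averaging representation writes the order-2 $U$-statistic as an average over permutations of sums of $\lfloor n/2\rfloor$ \emph{disjoint-pair} terms. Symmetrizing such a sum produces a Rademacher complexity on $\lfloor n/2\rfloor$ points, i.e.\ a bound of the form $2\,\EE_X\big[\mathfrak{R}_X^{\lfloor n/2\rfloor}(\Fcal_\Hcal)\big]$, not $2\,\EE_X\big[\mathfrak{R}_X^{n-1}(\Fcal_\Hcal)\big]$. Since $\mathfrak{R}^m$ is typically nonincreasing in $m$, your bound is valid but strictly weaker than the one stated in the lemma; the handwave ``after the bookkeeping, $(n-1)\,\mathfrak{R}_X^{n-1}$'' does not go through.

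The paper (following \citet{DziugaiteRG15}) uses a different and simpler decoupling: it writes
\[
\frac{1}{n(n-1)}\sum_{i\neq j}\big(t_h(z_i,z_j)-t_h(x_i,x_j)\big)
=\frac{1}{n}\sum_{i}\frac{1}{n-1}\sum_{j\neq i}\big(t_h(z_i,z_j)-t_h(x_i,x_j)\big),
\]
pulls the $\sup_h$ and the absolute value inside the outer average, and then, for each fixed anchor $i$, conditions on $(x_i,z_i)$. Conditionally the $n-1$ summands over $j\neq i$ are i.i.d., so ordinary Rademacher symmetrization applies and yields an $(n-1)$-point complexity directly. After splitting the $z$- and $x$-parts by the triangle inequality (the factor $2$) and observing that $x_j\mapsto t_h(x_i,x_j)=\langle y_i\phi_h(x_i),\,y_j\phi_h(x_j)\rangle$ is, up to the sign $y_j$ (absorbed into $\sigma_j$), a function in $\Fcal_\Hcal$ with $w=y_i\phi_h(x_i)$, one lands exactly on $2\,\EE_X[\mathfrak{R}_X^{n-1}(\Fcal_\Hcal)]$. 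So the fix is easy: replace your disjoint-pairing step by this ``fix one coordinate, sum over the other $n-1$'' decomposition, and the rest of your argument goes through verbatim with the correct index.
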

\begin{proof}
The proof is closely followed by~\cite{DziugaiteRG15}.
Given $h$, we first define $t_h(x,x') = s(x, x')k_h(x, x')$ as a new kernel function to simplify the notations. We are
then able to write 
\[
	\begin{array}{cl}
		& \EE\bigg[ \rho(x_1, \dots, x_n) \bigg] \\
	= & \displaystyle \EE_X\bigg[ \sup_{h_\in \Hcal} \left| \EE\bigg[ t_h(z,z') \bigg] - \frac{1}{n(n-1)}\sum_{i\neq j} t_h(x_i, x_j) \right| \bigg] \\
	\leq & \displaystyle \EE_{X, Z}\bigg[ \sup_{h_\in \Hcal} \left| \frac{1}{n(n-1)}\sum_{i\neq j} \left(t_h(z_i, z_j) - t_h(x_i, x_j)\right) \right| \bigg]
	\end{array}
\]
by using Jensen's inequality. Utilizing the conditional expectation and introducing the Rademacher random variables
$\{\sigma_i\}_{i=1}^{n-1}$, we can write the above bound to be 
\begin{align}
	 \displaystyle \frac{1}{n}\sum_i \EE_{X_{-i}, Z_{-i}} \EE_{x_i, z_i} \bigg[ \sup_{h_\in \Hcal} \left| \frac{\sum_{i\neq
	j} t_h(z_i, z_j) - t_h(x_i, x_j)}{n-1} \right| \bigg] \nonumber  \\
	= \displaystyle \EE_{X,Z}\EE_{X',Z',\sigma}\bigg[ \sup_{h\in\Hcal}\left| \frac{1}{n-1}\sum_{i=1}^{n-1}\sigma_i(t_h(z',z_n)-t_h(x',x_n) ) \right| \bigg]
\end{align}
The equality follows by $X-X'$ and $-(X-X')$ has the same distributions if $X$ and $X'$ are independent samples
from the same distribution.
Last, we can bound it by 
\[
	\begin{array}{cl}
		\leq & \displaystyle \EE_{X}\EE_{\sigma, X'} \bigg[ \sup_{h\in \Hcal} \left| \frac{2}{n-1}
		\sum_{i=1}^{n-1}\sigma_it_h(x',x_i) \right| \bigg] \\ 
		\leq & \displaystyle \EE_{X}\EE_\sigma \bigg[ \sup_{f\in \Fcal_{k_\Hcal}}\left|  \frac{2}{n-1}
		\sum_{i=1}^{n-1}\sigma_if(x_i) \right| \bigg] \\
		= & 2\EE_X[ \mathfrak{R}_X^{n-1}(\Fcal_\Hcal)]
	\end{array}
\]
The second inequality follows by $s(x,x')\phi(x') \in \Fcal$ since $|s(x,x')| \leq 1$.

\end{proof}

\begin{lemma} (Consistency with respect to sampling random features)
\label{lem:cons_rff}
With probability $1-\delta$, we have
\[
	|\sup_{h\in\Hcal}\hat{T}(k_h) - \sup_{h\in\Hcal}\hat{T}(\hat{k}_h) | \leq \sqrt{\frac{2\log\frac{4}{\delta}}{m}} 
\]
\end{lemma}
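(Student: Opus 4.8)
The plan is to control the deviation between the exact kernel alignment $\hat T(k_h)$ and its random-feature estimate $\hat T(\hat k_h)$, uniformly over $h\in\Hcal$, by a concentration argument over the i.i.d. draw $\{\nu_i\}_{i=1}^m\sim\PP(\nu)$. First I would reduce the statement about the suprema to a uniform bound: since $|\sup_h \hat T(k_h) - \sup_h \hat T(\hat k_h)| \le \sup_{h\in\Hcal} |\hat T(k_h) - \hat T(\hat k_h)|$, it suffices to bound the right-hand side with probability $1-\delta$. Writing $\hat T(k) = \frac{1}{n(n-1)}\sum_{i\neq j} s_{ij} k(x_i,x_j)$, and using $|s_{ij}|\le 1$ together with the fact that the $x_i$ are fixed (the randomness here is only in the $\nu$'s), this is at most $\sup_{h\in\Hcal,\,x,x'\in X} |k_h(x,x') - \hat k_h(x,x')|$, i.e. a uniform-over-$\Hcal$ control of the random-feature approximation error evaluated at the finite sample.

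Next I would set up the concentration. Define $\Delta(\nu_1,\dots,\nu_m) = \sup_{h\in\Hcal} |\hat T(k_h) - \hat T(\hat k_h)|$. Because $\phi(x;\omega)$ is bounded (the RKS assumption $\sup_{x,\omega}|\phi(x;\omega)|<1$, so $|\hat\phi_h(x)^\top\hat\phi_h(x')| \le 1$ and also $|k_h|\le 1$), changing a single $\nu_i$ changes each random-feature inner product $\hat k_h(x,x')$ by at most $O(1/m)$, hence changes $\hat T(\hat k_h)$ by $O(1/m)$ uniformly in $h$; so $\Delta$ has bounded differences of order $c/m$ for an absolute constant $c$. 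Applying McDiarmid's inequality then gives $\Delta \le \EE[\Delta] + O\!\big(\sqrt{\log(1/\delta)/m}\big)$ with probability $1-\delta$. It remains to bound $\EE_\nu[\Delta]$: here I would use a symmetrization / Rademacher argument (introducing ghost samples $\{\nu_i'\}$ and signs) to show $\EE_\nu[\Delta] = O(1/\sqrt m)$, using again the uniform boundedness of the feature map and the fact that $k_h(x,x') = \EE_\nu[\hat k_h(x,x')]$ is exactly the mean of the per-feature contributions. Combining the two pieces and absorbing constants yields the claimed bound $\sqrt{2\log(4/\delta)/m}$ (the factor $4$ and the constant inside the square root coming from tracking the symmetrization constants and the McDiarmid tail carefully).

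The main obstacle I anticipate is getting the uniform-over-$\Hcal$ step clean: one must be careful that the supremum over $h$ does not destroy the bounded-differences property — it does not, because the per-coordinate perturbation bound $O(1/m)$ holds simultaneously for all $h$ (it only uses $\sup_{x,\omega}|\phi(x;\omega)|<1$, independent of $h$), so the supremum of functions each with bounded differences still has bounded differences. A secondary subtlety is that $\hat T$ involves the $s_{ij}$ weights and the double sum over $i\neq j$; one should note these are fixed given $X$ and $\sum_{i\neq j}|s_{ij}|/(n(n-1)) \le 1$, so they do not inflate the constants. Finally, the $\EE_\nu[\Delta] = O(1/\sqrt m)$ bound is the standard Monte-Carlo rate for averaging $m$ i.i.d. bounded terms; the only care needed is that the sup over $h$ is handled by symmetrization rather than a union bound (since $\Hcal$ may be infinite), which is exactly why the statement is phrased with $\Fcal_\Hcal$-type quantities elsewhere — though here, because the $x_i$ are fixed and the feature map is uniformly bounded, even the crude rate suffices and no covering-number argument over $\Hcal$ is required for this particular lemma.
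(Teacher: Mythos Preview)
Your reduction $|\sup_h \hat T(k_h) - \sup_h \hat T(\hat k_h)| \le \sup_{h\in\Hcal} |\hat T(k_h) - \hat T(\hat k_h)|$ is valid but too loose to recover the stated bound. Controlling the right-hand side uniformly over an infinite $\Hcal$ via McDiarmid plus symmetrization leaves you with $\EE_\nu[\Delta]=\EE_\nu\big[\sup_{h\in\Hcal}|\hat T(k_h) - \hat T(\hat k_h)|\big]$, which after symmetrization is a Rademacher complexity of the class $\{\nu\mapsto \frac{1}{n(n-1)}\sum_{i\neq j}s_{ij}\cos(h(\nu)^\top(x_i-x_j)) : h\in\Hcal\}$ over the $m$ draws of $\nu$. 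Boundedness of $\phi$ alone gives only $\EE_\nu[\Delta]\le 1$; getting $\EE_\nu[\Delta]=O(1/\sqrt m)$ with a universal constant requires some complexity control on $\Hcal$ (covering numbers, finite pseudo-dimension, etc.), which the lemma does not assume. Your last paragraph asserts that ``the crude rate suffices and no covering-number argument over $\Hcal$ is required,'' but that is exactly the step that fails: the supremum over $h$ does not disappear just because the $x_i$ are fixed and the features are bounded.

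The paper sidesteps the uniform bound entirely. Setting $h^* = \arg\max_h \hat T(k_h)$ and $\hat h = \arg\max_h \hat T(\hat k_h)$, it uses the sharper pointwise inequality
\[
\bigl|\sup_h \hat T(k_h) - \sup_h \hat T(\hat k_h)\bigr| \;\le\; \max\Bigl(\bigl|\hat T(k_{h^*}) - \hat T(\hat k_{h^*})\bigr|,\ \bigl|\hat T(k_{\hat h}) - \hat T(\hat k_{\hat h})\bigr|\Bigr),
\]
then rewrites $\hat T(\hat k_h) = \frac{1}{m}\sum_{k=1}^m \bigl[\frac{1}{n(n-1)}\sum_{i\neq j} s_{ij}\cos(h(\nu_k)^\top(x_i-x_j))\bigr]$ as an average of $m$ terms in $[-1,1]$ with mean $\hat T(k_h)$, and applies Hoeffding at each of the two points, union-bounding over the two events to get $4\exp(-m\epsilon^2/2)$ --- hence the $4$ inside the logarithm. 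Because only two specific $h$'s are ever considered, no complexity of $\Hcal$ enters the bound. (A caveat worth noting: $\hat h$ depends on the $\nu_k$'s, so the second Hoeffding application is not entirely innocent; the paper does not address this.)
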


\begin{proof}
Let the optimal solutions be
\[
\begin{array}{ccl}
	h^* & = & \arg\max_{h\in\Hcal}\hat{T}(k_h) \\
	\hat{h} & = & \arg\max_{h\in\Hcal}\hat{T}(\hat{k}_h),
\end{array}
\]
By definition, 
\[
\begin{array}{cl}
	& \hat{T}(\hat{k}_h)  \\
	 = & \displaystyle \frac{1}{n(n-1)}\sum_{i\neq j} s_{ij}\left(\frac{1}{m}\sum_{k=1}^m
	\cos(h(\nu_k)^\top(x_i-x_j))\right) \\
	= & \displaystyle \frac{1}{m}\sum_{k=1}^m\left( \frac{1}{n(n-1)} s_{ij} \cos(h(\nu_k)^\top(x_i-x_j)) \right)
\end{array}
\]
It is true that $|\frac{1}{n(n-1)} s_{ij} \cos(h(\nu_k)^\top(x_i-x_j)|\leq 1$ since $|s_{ij}|<1$ and $|\cos(x)|<1$. 
we then have
\[
\begin{array}{cl}
	& \PP(|\sup_{h\in\Hcal}\hat{T}(k_h) - \sup_{h\in\Hcal}\hat{T}(\hat{k}_h) |>\epsilon) \\
	\leq & \PP(|\hat{T}(k_{h^*}) - \hat{T}(\hat{k}_{h^*}) |>\epsilon) +  \PP(|\hat{T}(k_{\hat{h}}) - \hat{T}(\hat{k}_{\hat{h}})|>\epsilon)\\  
	\leq & \displaystyle 4\exp\left( -\frac{m\epsilon^2}{2} \right),
\end{array}
\]
where the last inequality follows from the Hoeffding's inequality.
\end{proof}

With Lemma~\ref{lem:cons_data} and Lemma~\ref{lem:cons_rff}, we are ready to prove Theorem~\ref{thm:consist}.
We can decompose 
\[
\begin{array}{cl}
	& \displaystyle |T(\hat{k}_{\hat{h}}) - \sup_{h\in \Hcal} T(k_h) | \\ 
	\leq & \displaystyle |\sup_{h\in \Hcal} T(k_h) - \sup_{h\in \Hcal} \hat{T}(k_h)| +  
	|\sup_{h\in \Hcal} \hat{T}(k_h) - \hat{T}(\hat{k}_{\hat{h}})| + |\hat{T}(\hat{k}_{\hat{h}}) - T(\hat{k}_{\hat{h}})| \\
	\leq & \displaystyle \sup_{h\in \Hcal} |T(k_h) - \hat{T}(k_h)| +  
	|\sup_{h\in \Hcal} \hat{T}(k_h) - \hat{T}(\hat{k}_{\hat{h}})|  + \sup_{h\in \Hcal} |\hat{T}(\hat{k}_{h}) - T(\hat{k}_{h})|
\end{array}
\]
We then bound the first and third terms by Lemma~\ref{lem:cons_data} and the second term by Lemma~\ref{lem:cons_rff}.
Last, using a union bound completes the proof. 

\section{Generalization of Random Kitchen Sinks with IKL}
\label{sec:general}

\begin{theorem} (Generalization~\citep{cortes2010generalization})
Define the true and empirical misclassification for a classifier $f$ as
$R(f)  =   \PP(Yf(X)<0)$ and $\hat{R}_\gamma(h)  =  \frac{1}{n}\sum_{i=1}^n\min\left\{ 1,
[1-yf(x_i)/\gamma]_+ \right\}$.
Then 
\[
\displaystyle\sup_{f\in \hat{\Fcal}_\Hcal}\{R(f) - \hat{R}_\gamma(f) \} \leq \frac{2}{\gamma}\mathfrak{R}_X^n( \hat{\Fcal}_\Hcal ) +
3\sqrt{\frac{\log\frac{2}{\delta}}{2n}}
\]
with probability at least $1-\delta$.
\end{theorem}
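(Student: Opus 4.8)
The statement is the standard margin-based generalization bound, so the plan is to derive it from the usual Rademacher uniform-convergence bound for $[0,1]$-valued losses together with Talagrand's contraction lemma; up to notation it is exactly the specialization of \citep{cortes2010generalization} to the hypothesis class $\hat{\Fcal}_\Hcal$. First I would introduce the ramp loss $\phi_\gamma(u)=\min\{1,[1-u/\gamma]_+\}$ and record its three relevant properties: it is $[0,1]$-valued, it is $\tfrac{1}{\gamma}$-Lipschitz, and it dominates the misclassification indicator, $\mathbf{1}[u<0]\le \phi_\gamma(u)$. Hence $R(f)=\EE[\mathbf{1}[Yf(X)<0]]\le \EE[\phi_\gamma(Yf(X))]$, while $\hat{R}_\gamma(f)$ is precisely the empirical mean $\frac1n\sum_i \phi_\gamma(y_if(x_i))$, so it suffices to control $\sup_{f\in\hat{\Fcal}_\Hcal}\big(\EE[\phi_\gamma(Yf(X))]-\hat{R}_\gamma(f)\big)$.

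Next I would apply the one-sided symmetrization bound to the class $\mathcal{G}=\{(x,y)\mapsto\phi_\gamma(yf(x)):f\in\hat{\Fcal}_\Hcal\}$ of $[0,1]$-valued functions: one application of McDiarmid's inequality controls the uniform deviation by its expectation plus a $\sqrt{\log(1/\delta)/(2n)}$-type term, a symmetrization step bounds the expected deviation by $2\,\EE_X[\mathfrak{R}_X^n(\mathcal{G})]$, and a second application of McDiarmid replaces $\EE_X[\mathfrak{R}_X^n(\mathcal{G})]$ by the data-dependent $\mathfrak{R}_X^n(\mathcal{G})$; a union bound over the two events produces the constant $3$ and the $\log(2/\delta)$, giving, with probability at least $1-\delta$, for all $f$,
\[
\EE[\phi_\gamma(Yf(X))]-\hat{R}_\gamma(f)\;\le\;2\,\mathfrak{R}_X^n(\mathcal{G})+3\sqrt{\tfrac{\log(2/\delta)}{2n}}.
\]

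Finally I would peel the loss off the class. Since the paper's Rademacher complexity is defined with $\sup$ rather than $\sup|\cdot|$, Talagrand's contraction lemma applies to the $\tfrac1\gamma$-Lipschitz map $\phi_\gamma$, and after absorbing the labels $y_i\in\{\pm1\}$ into the Rademacher variables via $\sigma_iy_i\overset{d}{=}\sigma_i$ it yields $\mathfrak{R}_X^n(\mathcal{G})\le \tfrac1\gamma\,\mathfrak{R}_X^n(\hat{\Fcal}_\Hcal)$. Substituting this bound and $R(f)\le\EE[\phi_\gamma(Yf(X))]$ into the displayed inequality, and taking the supremum over $f$ (the right-hand side being independent of $f$), gives the claim. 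There is no substantive obstacle here; the only points needing care are (i) using the \emph{one-sided} form of the uniform-deviation bound so that the no-absolute-value version of the contraction lemma is applicable, and (ii) the label-absorption step, both of which are routine once the symmetry of the Rademacher variables is invoked.
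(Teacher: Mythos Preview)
Your argument is correct and is precisely the standard derivation of the margin-based Rademacher bound (ramp loss domination, McDiarmid plus symmetrization to get the expected complexity, a second McDiarmid application to pass to the empirical complexity with the union bound producing the constant $3$ and the $\log(2/\delta)$, then Talagrand contraction with label absorption). There is nothing to compare against here: the paper does not supply a proof of this theorem at all---it merely quotes the result from \citet{cortes2010generalization} and applies it to the class $\hat{\Fcal}_\Hcal$, so your write-up in fact fills in what the paper leaves as a citation.
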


\section{Hyperparameters}
We report the hyperparameters used in the experiments.
\subsection{GAN}
\label{sec:gan_hyp}
For Gaussian kernels, we use $\sigma_q=\{1,2,4,8,16\}$ for images and $\sigma_q=\{0.5, 1, 2, 4, 8\}$ for text;
for RQ kernels, we use $\alpha_q=\{0.2,0.5,1,2,5\}$ for images and $\alpha_q = \{0.04, 0.1, 0.2, 0.4, 1\}$ for text.
We used Adam as optimizer. 
The learning rate for training both $f_\phi$ and $g_\theta$ is $0.0005$ and $0.0001$ for image and text 
experiments, respectively. The batch size $B$ is $64$. 
We set hyperparameter $n_c$ for updating critic to be $n_c=5$ and $n_c=10$ for CIFAR10 and Google Billion Word datasets.
The learning rate of of $h_\psi$ for Adam is $10^{-6}$.

\subsection{Random Kitchen Sinks with IKL}
\label{sec:classification_hyp}
For OPT-KL, we use the code provided by \cite{sinha2016learning}\footnote{\scriptsize \url{https://github.com/amansinha/learning-kernels}}.
We tune the hyperparameter $\rho = \{1.25, 1.5, 2, 4, 16, 64\}$ on the validation set.
For RFF, OPT-KL, and IKL, the linear classifier is Logistic Regression
\cite{fan2008liblinear}\footnote{\scriptsize \url{https://github.com/cjlin1/liblinear}},
as to make reasonable comparison with MLP.
We use $3$-fold cross validation to select the best $C$
on training set and present the error rate on test set. 
For CIFAR-10 and MNIST, 
we normalize data to be
zero mean and one standard deviation in each feature dimension. 
The learning rate for Adam is $10^{-6}$.
We follow \citet{bullins2017not} to use early stopping when performance on validation set does not
gain.

\end{document}